\newtheorem{theorem}{Theorem}
\newtheorem{definition}{Definition}
\newtheorem{corollary}{Corollary}
\renewcommand{\[}{\begin{eqnarray}}
\renewcommand{\]}{\end{eqnarray}}
\newcommand{\E}{\mathbb{E}}
\newcommand{\R}{\mathbb{R}}
\icmltitlerunning{On the generalization of bayesian deep nets}
\begin{document}

\twocolumn[
\icmltitle{On the generalization of bayesian deep nets for multi-class classification}




\begin{icmlauthorlist}
\icmlauthor{Yossi Adi}{bar}
\icmlauthor{Yaniv Nemcovsky}{te}
\icmlauthor{Alex Schwing}{il}
\icmlauthor{Tamir Hazan}{te}
\end{icmlauthorlist}

\icmlaffiliation{bar}{Bar-Ilan University}
\icmlaffiliation{il}{University of Illinois}
\icmlaffiliation{te}{Technion}

\icmlcorrespondingauthor{Yossi Adi}{adios@lucillecrew.com}
\icmlcorrespondingauthor{Tamir Hazan}{tamir.hazan@technion.ac.il}

\icmlkeywords{Machine Learning, ICML}

\vskip 0.3in
]



\printAffiliationsAndNotice{}  


\begin{abstract}
Generalization bounds which assess the difference between the true risk and the empirical risk have been studied extensively. However, to obtain bounds, current techniques use strict assumptions such as a uniformly bounded or a Lipschitz loss function. To avoid these assumptions, in this paper, we propose a new generalization bound for Bayesian deep nets by exploiting the contractivity of the Log-Sobolev inequalities. Using these inequalities adds an additional loss-gradient norm term to the generalization bound, which is intuitively a surrogate of the model complexity. Empirically, we analyze the affect of this loss-gradient norm term using different deep nets.
\end{abstract}


\section{Introduction}
\label{sec:intro}

\begin{figure}[!t]
  \centering
  \subfloat{\includegraphics[width=0.5\textwidth, height=4cm, keepaspectratio]{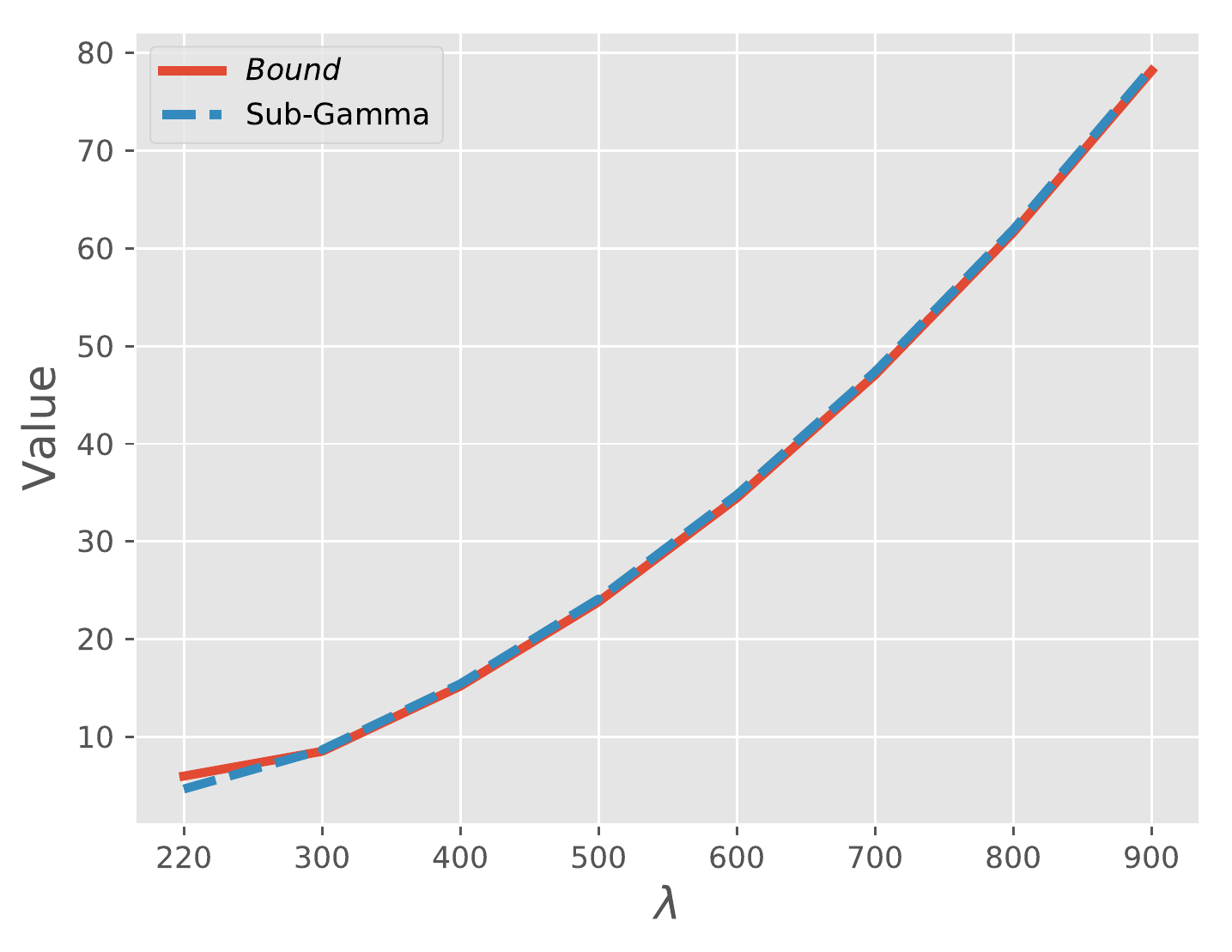}}
  \hfill
  \subfloat{\includegraphics[width=0.5\textwidth, height=4cm, keepaspectratio]{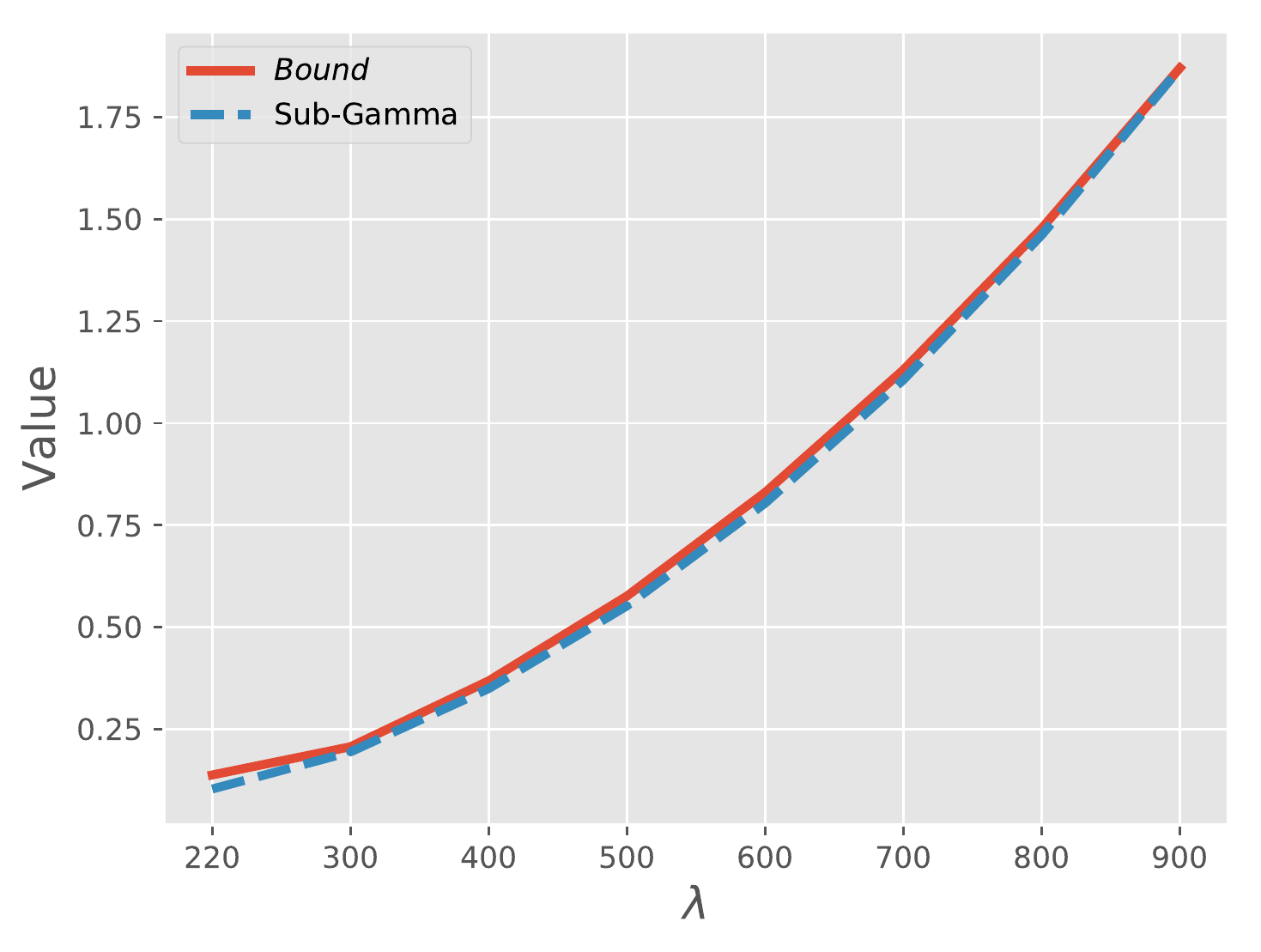}}
  \caption{The proposed bound as a function of $\lambda$ for both ResNet (top) and a Linear model (bottom). Notice, this suggests that the random variables $L_D(w) - L_S(w)$ for both ResNet and Linear models are sub-gamma (see Definition \ref{def:sg} and Theorem \ref{thm:alquir}). We obtain the results for ResNet using CIFAR-10 dataset and for the Linear model using MNIST dataset. The parameter for the sub-gamma fit are $v=1.0$ for both models and $c=1\mathrm{e}{-5}$ for ResNet and $c=1\mathrm{e}{-4}$ for the Linear model.}
  \label{fig:teaser}
\end{figure}

Deep neural networks are ubiquitous across  disciplines and often achieve state of the art results. Albeit deep nets  are able to encode highly complex input-output relations, in practice, they do not tend to overfit~\citep{Zhang16}. This tendency to not overfit has been investigated in numerous works on generalization bounds. Indeed, many generalization bounds apply to composite functions specified by deep nets. However, most of these bounds assume that the loss function is bounded or Lipschitz. Unfortunately, this assumption excludes plenty of deep nets and Bayesian deep nets that rely on the popular negative log-likelihood (NLL) loss. 

In this work we introduce a new PAC-Bayesian generalization bound for unbounded loss functions with unbounded gradient-norm, i.e., non-Lipschitz functions. This setting is closer to present-day deep net training, which uses the unbounded NLL loss and requires to avoid large gradient values during training so as to prevent exploding gradients. To prove the bound we utilize the contractivity of the log-Sobolev inequality~\citep{ledoux1999concentration}. It enables to bound the moment-generating function of the model risk. Our PAC-Bayesian bound adds a novel complexity term to existing PAC-Bayesian bounds: the expected norm of the loss function gradients computed with respect to the input. Intuitively this norm measures the complexity of the loss function, i.e., the model. In our work we prove that this complexity term is sub-gamma when considering linear models with the NLL loss, or more generally, for any linear model with a Lipschitz loss function. We also derive a bound for any Bayesian deep net, which permits to verify empirically that this complexity term is sub-gamma. See Figure \ref{fig:teaser}. 

This new term, which measures the complexity of the model, augments existing PAC-Bayesian bounds for bounded or Lipschitz loss functions which typically consist of two terms: (1) the empirical risk, which measures the fitting of the posterior over the parameters to the training data, and (2) the KL-divergence between the prior and the posterior distributions over the parameters, which measures the complexity of learning the posterior when starting from the prior over the parameters. 


\section{Related work}
\label{sec:related}

Generalization bounds for deep nets were explored in various settings. VC-theory provides both upper bounds and lower bounds to the network's VC-dimension, which are linear in the number of network parameters~\citep{Bartlett17b, Bartlett19}. While VC-theory asserts that such a model should overfit as it can learn any random labeling (e.g.,~\citet{Zhang16}), surprisingly, deep nets generally do not overfit. 

Rademacher complexity allows to apply data dependent bounds to deep nets~\citep{Bartlett02, Neyshabur15, Bartlett17, Golowich17, Neyshabur18}. These bounds rely on the loss and the Lipschitz constant of the network and consequently depend on a product of norms of weight matrices which scales exponentially in the network depth. \citet{Wei19} developed a bound that considers the gradient-norm over training examples. In contrast, our bound depends on average quantities of the gradient-norm and thus we answer an open question of \citet{Bartlett17} about the existence of bounds that depend on average loss and average gradient-norm, albeit in a PAC-Bayesian setting. PAC-Bayesian bounds that use Rademacher complexity have been studied by \citet{Kakade09, Yang19}. 

Stability bounds may be applied to unbounded loss functions and in particular to the negative log-likelihood (NLL) loss~\citep{Bousquet02, Rakhlin05, Shalev09, Hardt15, Zhang16}. However, stability bounds for convex loss functions, e.g., for logistic regression, do not apply to deep nets since they require the NLL loss to be a convex function of the parameters.  Alternatively,~\citet{Hardt15} and \citet{Kuzborskij17} estimate the stability of stochastic gradient descent dynamics, which strongly relies on early stopping. This approach results in weaker bounds for the non-convex setting. Stability PAC-Bayesian bounds for bounded and Lipschitz loss functions were developed by \citet{London17}. 

PAC-Bayesian bounds were recently applied to deep nets~\citep{McAllester13, Dziugaite17, Neyshabur17}. In contrast to our work, those related works all consider bounded loss functions. An excellent survey on PAC-Bayesian bounds was provided by~\citet{Germain16}.  \citet{Alquier16} introduced PAC-Bayesian bounds for linear classifiers with the hinge-loss by explicitly bounding its moment generating function. \citet{Alquier12} provide an analysis for PAC-Bayesian bounds with Lipschitz functions. Our work differs as we derive PAC-Bayesian bounds for non-Lipschitz functions. Work by \citet{Germain16} is closer to our setting and considers PAC-Bayesian bounds for the quadratic loss function. In contrast, our work considers the multi-class setting, and non-linear models. PAC-Bayesian bounds for the NLL loss in the online setting were put forward by~\citet{Takimoto00, Banerjee06, Bartlett13, Grunwald17}. The online setting does not consider the whole sample space and therefore is simpler to analyze in the Bayesian setting.

PAC-Bayesian bounds for the NLL loss function are intimately related to learning Bayesian inference~\citep{Germain16}. Recently many works applied various posteriors in Bayesian deep nets. \citet{Gal15,gal2016uncertainty} introduce a Bayesian inference approximation using Monte Carlo (MC) dropout, which approximates a Gaussian posterior using Bernoulli dropout. \citet{Srivastava14, Kingma15} introduced Gaussian dropout which effectively creates a Gaussian posterior that couples between the mean and the variance of the learned parameters and explored the relevant log-uniform priors. \citet{Blundell15, Louizos16} suggest to take a full Bayesian perspective and learn separately the mean and the variance of each parameter.

\section{Background}

Generalization bounds provide statistical guarantees on  learning algorithms. They assess how the learned parameters $w$ of a model perform on test data given the model's result on the training data $S = \{(x_1,y_1),\ldots,(x_m,y_m)\}$, where $x_i$ is the data instance and $y_i$ is the corresponding label. The performance of the parametric model is measured by a loss function $\ell(w,x,y)$. The risk of this model is its average loss, when the data instance and its label are sampled from the true but unknown distribution $D$. We denote the risk by $L_D(w) = \E_{(x,y) \sim D} \ell(w,x,y)$. The empirical risk is the average training set loss  $L_S(w) = \frac{1}{m}\sum_{i=1}^m \ell(w,x_i,y_i)$. 

PAC-Bayesian theory bounds the expected risk $\E_{w \sim q} L_D(w)$ of a model  when its parameters are averaged over the learned posterior distribution $q$. The parameters of the posterior distribution are learned from the training data $S$. In our work we start from the following PAC-Bayesian bound:

\begin{theorem}[\citet{Alquier16}]
\label{thm:alquir}
Let $KL(q||p) = \int q(w) \log(q(w)/p(w)) dw$ be the KL-divergence between two probability density functions $p,q$. For any $\lambda > 0$, for any $\delta \in (0,1]$ and for any prior distribution $p$, with probability at least $1-\delta$ over the draw of the training set $S$, the following holds simultaneously for any posterior distribution $q$: $\E_{w \sim q} [L_D(w)] \le$
\[
 \E_{w \sim q}[L_S(w)] + \frac{C(\lambda,p) +  KL(q || p) + \log(1/\delta)}{\lambda},\label{eq:pb}
\]
where 
\[
\label{eq:alquier}
C(\lambda,p) = \log \E_{w \sim p, S \sim D^m}  [ e^{ \lambda (L_D(w) - L_S(w))} ].
\]
\end{theorem}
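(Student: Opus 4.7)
The plan is to combine the Donsker–Varadhan change-of-measure inequality with a single application of Markov's inequality to a random variable that does not depend on $q$. That way the bound will hold uniformly in $q$ for free, and the only place where randomness in $S$ enters will be absorbed by the $\log(1/\delta)$ term.

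First I would recall (or quickly derive from the nonnegativity of KL) the Donsker–Varadhan variational inequality: for any measurable function $\phi$ and any probability densities $p,q$,
\[
\E_{w\sim q}[\phi(w)] \;\le\; KL(q\|p) \;+\; \log \E_{w\sim p}\bigl[e^{\phi(w)}\bigr].
\]
I would apply this with $\phi(w) = \lambda\bigl(L_D(w) - L_S(w)\bigr)$, which for a fixed sample $S$ yields
\[
\lambda \E_{w\sim q}\bigl[L_D(w)-L_S(w)\bigr] \;\le\; KL(q\|p) + \log \E_{w\sim p}\!\bigl[e^{\lambda(L_D(w)-L_S(w))}\bigr].
\]
Note that the right-hand side depends on $S$ only through the scalar random variable $Z(S) := \E_{w\sim p} e^{\lambda(L_D(w)-L_S(w))}$, which does \emph{not} depend on $q$.

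Next I would control $Z(S)$ by Markov's inequality. Since $Z(S) \ge 0$ and $\E_{S\sim D^m}[Z(S)] = e^{C(\lambda,p)}$ by definition of $C(\lambda,p)$, Markov gives
\[
\P_{S\sim D^m}\!\left[ Z(S) > \tfrac{1}{\delta}\, e^{C(\lambda,p)} \right] \;\le\; \delta,
\]
so on the complementary event (probability at least $1-\delta$) we have $\log Z(S) \le C(\lambda,p) + \log(1/\delta)$. Crucially, this high-probability event is defined without reference to $q$, so on it the previous display holds simultaneously for every posterior $q$. Combining and dividing by $\lambda>0$ yields the stated bound.

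The only subtle point is the order of quantifiers: the ``for any $q$'' must sit inside the high-probability statement. This is handled automatically here because the Markov step is applied to $Z(S)$, a quantity independent of $q$; the Donsker–Varadhan step then provides the uniformity in $q$ deterministically. I do not foresee a real obstacle — the proof is essentially a two-line assembly of Donsker–Varadhan plus Markov — and no assumption of boundedness or Lipschitzness of $\ell$ is needed, which is exactly why this bound is the right starting point for the paper's subsequent non-Lipschitz analysis via log-Sobolev control of $C(\lambda,p)$.
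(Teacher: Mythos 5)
Your proof is correct and is exactly the standard argument (Donsker--Varadhan change of measure applied to $\lambda(L_D(w)-L_S(w))$, then Markov's inequality on the $q$-independent random variable $Z(S)=\E_{w\sim p}e^{\lambda(L_D(w)-L_S(w))}$, giving uniformity over $q$ for free) by which this bound is established in the cited reference. The paper itself states Theorem~\ref{thm:alquir} as an imported result from \citet{Alquier16} and offers no proof, so there is nothing in the paper to diverge from; your reconstruction, including the remark on the order of quantifiers, is the right one.
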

Unfortunately, the complexity term $C(\lambda,p)$ of this bound is impossible to compute for large values of $\lambda$, as we show in our experimental evaluation. To deal with this complexity term, \citet{Alquier16, Germain16, Boucheron13} consider the sub-Gaussian assumption, which amounts to the bound $C(\lambda,p) \le \frac{\lambda^2 v}{2}$ for any $\lambda \in \R$ and some variance factor $v$. This assumption is also referred to as the Hoeffding assumption, which is related to  Hoeffding's lemma that is usually applied in PAC-Bayesian bounds to loss function that are uniformly bounded by a constant, i.e., for any $w,x,y$ simultaneously. 

Unfortunately, many loss functions that are used in practice are unbounded. In particular, the NLL loss function is unbounded even in the multi-class setting when $y$ is discrete. For instance, consider a  fully connected deep net, where the input vector of the $k$-th layer is a function of the parameters of all previous layers, i.e., $x_{k}(W_0,\ldots,W_{k-1})$. The entries of $x_{k}$ are computed from the response of its preceding layer, i.e.,  $W_{k-1} x_{k-1}$, followed by a transfer function $\sigma(\cdot)$, i.e., $x_{k} = \sigma(W_{k-1} x_{k-1})$. Since the NLL is define as $-\log p(y|x,w) = -(W_k x_k)_y + \log (\sum_{\hat y} e^{(W_k x_k)_{\hat y}} )$, the NLL loss increases with $r$ and is unbounded when $r \rightarrow \infty$ if the rows in $W_k$ consist of the vector $r x_k$. In our experimental validation in Section~\ref{sec:exp} we show that the unboundedness of the NLL loss results in a complexity term $C(\lambda,p)$ that is not sub-Gaussian. 

This complexity term $C(\lambda,p)$ influences the value of $\lambda$, which controls the convergence rate of the bound, as it weighs the complexity terms $C(\lambda,p)$ and $KL(q||p)$ by $\frac{1}{\lambda}$. Therefore, a tight bound requires $\lambda$ to be as large as possible. However, since $\lambda$ influences $C(\lambda,p)$ exponentially, one needs to make sure that $C(\lambda,p) < \infty$. In such cases one may use sub-gamma random variables \cite{Alquier16, Germain16, Boucheron13}: 
\begin{definition}
\label{def:sg}
The random variable $L_D(w) - L_S(w)$ is called sub-gamma if the complexity term $C(\lambda,p)$ in Theorem \ref{thm:alquir} satisfies $C(\lambda,p) \le \frac{\lambda^2 v}{2(1-\lambda c)}$  for every $\lambda$ such that $0 < \lambda < 1/c$.  
\end{definition}

In Corollary~\ref{cor:logistic} we prove that the complexity term $C(\lambda,p)$ is sub-gamma when considering linear models with the NLL loss, or more generally, for any linear model with a Lipschitz loss function. In Corollary~\ref{cor:nonlinear} we derive a bound on $C(\lambda,p)$ for any Bayesian deep net, which permits to verify empirically that $C(\lambda,p)$ is sub-gamma. 
    
\section{PAC-Bayesian bounds for smooth loss functions}

Our main theorem below shows that for smooth loss functions, the complexity term $C(\lambda,p)$ is bounded by the expected gradient-norm of the loss function with respect to the data $x$. This property is appealing since the gradient-norm contracts the network's output, as evident in its extreme case by the vanishing gradients property. In our experimental evaluation we show how this contractivity depends on the depth of the network and the variance of the prior and its affect on the generalization of Bayesian deep nets.

\begin{theorem}
\label{thm:pac-bound-general-2}
Consider the setting of Theorem \ref{thm:alquir} and assume $(x,y) \sim D$, $x$ given $y$ follows the Gaussian distribution and  $\ell(w,x,y)$ is a smooth loss function (e.g., the negative log-likelihood loss). Let $M(\alpha) \triangleq \E_{(x,y) \sim D} e^{\alpha (-\ell(w,x,y))}$. Then $C(\lambda,p) \le $
\begin{equation}
\label{eq:paper_bound}
\hspace{-0.3cm}\log\E_{w \sim p} e^{2\lambda \E_{(x, y) \sim D}\big [ \| \nabla_{x} \ell(w, x, y) \| ^2 \int_0^{\frac{\lambda}{m}} \frac{ e^{-\alpha \ell(w, x, y)} }{M(\alpha)}  d \alpha \big]}.
\end{equation}
\end{theorem}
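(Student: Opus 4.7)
The plan is to reduce $C(\lambda,p)$ to a per-sample log-moment-generating function and then bound that log-MGF by the target gradient-norm expression using a Herbst-style argument driven by the Gaussian log-Sobolev inequality (LSI).

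\textbf{Step 1 (factor the sample MGF).} By Fubini, pull $\E_{w\sim p}$ outside the $\log$ in the definition of $C(\lambda,p)$. For fixed $w$, the i.i.d.\ structure of $S=\{(x_i,y_i)\}_{i=1}^m$ factorizes the inner expectation:
\begin{equation*}
\E_S e^{\lambda(L_D(w)-L_S(w))} = e^{\lambda L_D(w)}\prod_{i=1}^m \E_{(x_i,y_i)} e^{-\lambda \ell(w,x_i,y_i)/m} = e^{\lambda L_D(w)}\, M(\lambda/m)^m.
\end{equation*}
Taking logs reduces the claim to the pointwise-in-$w$ inequality
$\lambda L_D(w) + m\log M(\lambda/m) \le 2\lambda\,\E_{(x,y)\sim D}\bigl[\|\nabla_x\ell\|^2 \int_0^{\lambda/m}\tfrac{e^{-\alpha\ell}}{M(\alpha)}\,d\alpha\bigr]$.

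\textbf{Step 2 (Herbst via LSI).} Let $\phi(\alpha)=\log M(\alpha)$. Apply the Gaussian LSI to $g(x)=e^{-\alpha\ell(w,x,y)/2}$; since $\|\nabla_x g\|^2=(\alpha^2/4)\,e^{-\alpha\ell}\|\nabla_x\ell\|^2$, this yields the entropy inequality
$\mathrm{Ent}(e^{-\alpha\ell}) \le c\,\alpha^2\,\E\bigl[e^{-\alpha\ell}\|\nabla_x\ell\|^2\bigr]$
for a constant $c$ determined by the LSI constant of $x\mid y$ together with the tensorization over $y$. Because $\mathrm{Ent}(e^{-\alpha\ell}) = -\alpha\,\E[\ell\,e^{-\alpha\ell}] - M(\alpha)\log M(\alpha) = M(\alpha)\bigl(\alpha\phi'(\alpha)-\phi(\alpha)\bigr)$, this is equivalent to the differential inequality
\begin{equation*}
\Bigl(\tfrac{\phi(\alpha)}{\alpha}\Bigr)' \;=\; \frac{\alpha\phi'(\alpha)-\phi(\alpha)}{\alpha^2} \;\le\; c\,\E\!\left[\tfrac{e^{-\alpha\ell}}{M(\alpha)}\|\nabla_x\ell\|^2\right].
\end{equation*}
Integrating from $0$ to $\lambda/m$, using $\phi(0)=0$ and $\lim_{\alpha\to 0}\phi(\alpha)/\alpha = \phi'(0) = -L_D(w)$, multiplying by $\lambda$, and swapping the order of integration via Fubini delivers the Step~1 bound. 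Exponentiating and taking $\E_{w\sim p}$ recovers \eqref{eq:paper_bound}.

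\textbf{Main obstacle.} The delicate point is the LSI application: Gaussianity is only assumed for $x\mid y$, while $M(\alpha)$ is the joint MGF over $(x,y)$. Lifting the per-$y$ LSI to a bound on $\mathrm{Ent}(e^{-\alpha\ell})$ requires the entropy tensorization $\mathrm{Ent}(f)=\E_y\,\mathrm{Ent}_{x\mid y}(f) + \mathrm{Ent}_y(\E_{x\mid y}f)$, and the residual $\mathrm{Ent}_y(M_y(\alpha))$ must be absorbed on the gradient-norm side; tracking this is what fixes the prefactor $2\lambda$ appearing in \eqref{eq:paper_bound}. A secondary bookkeeping point is that integrating $(\phi(\alpha)/\alpha)'$ from $\alpha=0$ uses the smoothness hypothesis on $\ell$ to guarantee $\phi$ is $C^1$ at the origin with $\phi'(0)=-L_D(w)$, and that non-negativity of $\ell$ (as for the NLL) keeps $M(\alpha)\in(0,1]$ so the integrand in \eqref{eq:paper_bound} is well defined.
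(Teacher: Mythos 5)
Your proposal follows essentially the same route as the paper's proof: the i.i.d.\ factorization of the sample MGF, the Herbst argument on $K(\alpha)=\log M(\alpha)/\alpha$ with $K(0)=-L_D(w)$, the Gaussian log-Sobolev inequality applied to $e^{-\alpha\ell/2}$ to bound $K'(\alpha)$, and then integration with cancellation of the $e^{\pm\lambda L_D(w)}$ terms. The conditional-Gaussianity issue you flag --- that the LSI is available only for $x\mid y$ while $M(\alpha)$ is the joint MGF over $(x,y)$, so the residual $\mathrm{Ent}_y$ term from tensorization must be absorbed into the constant --- is a genuine subtlety that the paper's own proof passes over by asserting the inequality directly for the joint distribution.
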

Our proof technique  to show the main Theorem uses the log-Sobolev inequality for Gaussian distributions \citep{ledoux1999concentration} as we illustrate next.

\begin{proof}
The proof consists of three steps. First we use the statistical independence of the training samples to decompose the moment generating function
\[
\label{eq:mgfi}
\E_{S \sim D^m}  [ e^{ \lambda (L_D(w) - L_S(w)) } ] = e^{\lambda L_D(w)} M\Big(\frac{\lambda}{m} \Big)^m. 
\]
Next we consider the moment generating function $M(\frac{\lambda}{m}) \triangleq \E_{(\hat{x}, \hat{y}) \sim D} [ e^{ \frac{\lambda}{m} (- \ell(w, \hat{x}, \hat{y}))  } ]$ in its log-space, i.e., by considering the cumulant generating function $K(\alpha) \triangleq \frac{1}{\alpha} \log M(\alpha)$ and obtain the following equality: 
\[
\label{eq:herbst}
M\Big(\frac{\lambda}{m}\Big) = e^{-\frac{\lambda}{m} L_D(w) + \frac{\lambda}{m} \int_0^{\frac{\lambda}{m}} \frac{ \alpha M'(\alpha) -  M(\alpha) \log M(\alpha)}{\alpha^2 M(\alpha)} d\alpha }.
\]
Finally we use the log-Sobolev inequality for Gaussian distributions, $\alpha M'(\alpha) -  M(\alpha) \log M(\alpha) \le $
\[
\label{eq:logsob}
 2 \cdot \E_{(x,y) \sim D} [ e^{-\alpha \ell(w,x,y)} \alpha^2 \| \nabla_x \ell(w,x,y) \| ^2 ] 
\]
and complete the proof through some algebraic manipulations. 

The first step of the proof results in Eq.~\eqref{eq:mgfi}. To derive it we use the independence of the training samples: $\E_{S \sim D^m}  [ e^{ \lambda (L_D(w) - L_S(w)) } ]=$  
\[
&&e^{ \lambda L_D(w)} \E_{S \sim D^m}  [ e^{ \lambda \frac{1}{m} \sum_{i=1}^m (- \ell(w,x_i,y_i)) } ] \\
&=& e^{ \lambda L_D(w)} \prod_{i=1}^m \E_{(x_i,y_i) \sim D}  [ e^{\frac{\lambda}{m} (- \ell(w,x_i,y_i))}  ] \\
&=& e^{ \lambda L_D(w)} \Big( \E_{(x,y) \sim D}  [ e^{\frac{\lambda}{m} (- \ell(w,x_i,y_i))}  ]  \Big)^m. \label{eq:mgf_f}
\] 
The first equality holds since $e^{ \lambda L_D(w)}$ is a constant that is independent of the expectation $S \sim D^m$. The last equality holds since $(x_i,y_i) \sim D$ are identically distributed. 

The second step of the proof results in Eq.~\eqref{eq:herbst}. It relies on the relation of the moment generating function and the cumulant generating function $M(\frac{\lambda}{m}) = e^{\frac{\lambda}{m}  K(\frac{\lambda}{m})}$. The fundamental theorem of calculus asserts $K(\frac{\lambda}{m}) - K(0) = \int_0^{\lambda/m} K'(\alpha) d \alpha$, while $K'(\alpha)$ refers to the derivative at $\alpha$. We then compute $K'(\alpha)$ and $K(0)$: 
\[
K'(\alpha) &=& \frac{ \alpha M'(\alpha) -  M(\alpha) \log M(\alpha)}{\alpha^2 M(\alpha)}, \label{eq:k'}\\
K(0) &=& \lim_{\alpha \rightarrow 0^+} \frac{\log M(\alpha)}{\alpha}  = -L_D(w). 
\]   
The second equality follows from l'Hopital rule: $\lim_{\alpha \rightarrow 0^+} \frac{\log M(\alpha)}{\alpha} \stackrel{\text{l'Hopital}}{=} \frac{M'(0) / M(0)}{1}$ and recalling that $M(0)=1$ and $ M'(0) = -L_D(w)$.

The third and final step of the proof begins with applying the log-Sobolev inequality in Eq.~\eqref{eq:logsob} for Gaussian distributions. Combining it with Eq.~\eqref{eq:herbst} leads to the inequality $M(\frac{\lambda}{m}) \le$ 
\[
e^{-\frac{\lambda}{m} L_D(w) + \frac{\lambda}{m} \int_0^{\frac{\lambda}{m}} \frac{ 2 \cdot \E_{(x,y) \sim D} [ e^{-\alpha \ell(w,x,y)} \alpha^2 \| \nabla_x \ell(w,x,y) \| ^2 ]}{\alpha^2 M(\alpha)} d\alpha }.
\]

It is insightful to see that $M(\frac{\lambda}{m})^m \le$
\[
\label{eq:insight}
e^{-\lambda L_D(w)} e^{\lambda \int_0^{\frac{\lambda}{m}} \frac{ 2 \cdot \E_{(x,y) \sim D} [ e^{-\alpha \ell(w,x,y)} \alpha^2 \| \nabla_x \ell(w,x,y) \| ^2 ]}{\alpha^2 M(\alpha)} d\alpha }. 
\]
This is compelling because the $e^{-\lambda L_D(w)}$ term in the above inequality  cancels  with the term $e^{\lambda L_D(w)}$ in Eq.~\eqref{eq:mgfi}. This shows theoretically why the complexity term mostly depends on the gradient-norm. This observation also concludes the proof after rearranging  terms. 
\end{proof}

The above theorem can be extended to settings for which $x$ is sampled from any log-concave distribution, e.g., the Laplace distribution, cf.~\citet{gentil2005logarithmic}. For readability we do not discuss this generalization here. 

Eq.~\eqref{eq:mgfi} hints at the infeasibility of computing the complexity term $C(\lambda,p)$ directly for large values of $\lambda$: since the loss function is non-negative, the value of $e^{\lambda L_D(w)}$ grows exponentially with $\lambda$, while $e^{ \frac{\lambda}{m} (- \ell(w, \hat{x}, \hat{y}))  }$ diminishes to zero exponentially fast. These opposing quantities make evaluation of $C(\lambda,p)$ numerically infeasible. In contrast, our bound makes all computations in  log-space, hence its computation is feasible for larger values of $\lambda$, up to their sub-gamma interval $0< \lambda < 1/c$, see Table \ref{tab:optimize}.

Notably, the general bound in Theorem~\ref{thm:pac-bound-general-2} is more theoretical than practical: to estimate it in practice one needs to avoid the integration over $\alpha$. 
However, it is an important intermediate step to derive a practical bound for linear models with Lipschitz loss function, and a general bound for any smooth loss function as we discuss in the next two sections respectively.


\subsection{Linear models}
\label{sec:linear}

In the following we consider smooth loss functions over linear models in the multi-class setting, where $x \in \R^d$ is the data instance, $y\in\{1,\dots,k\}$ are the possible labels and the loss function takes the form $\ell(w,x,y) \triangleq \hat \ell(Wx, y)$. We also assume that $\hat \ell(t, y)$ is a Lipschitz function, i.e., $\|\nabla_t \hat \ell(t, y) \| \le L$. Included in these assumptions are the popular NLL loss  $-\log p(y|x,w) = -(W x)_y + \log (\sum_{\hat y} e^{(W x)_{\hat y}} )$ that is used in logistic regression and the multi-class hinge loss $\max_{\hat y} \{ (W x)_{\hat y} - (W x)_y + 1[y \ne \hat y]\}$ that is used in support vector machines (SVMs).  

\begin{corollary}
\label{cor:logistic}
Consider smooth loss functions over linear models $\ell(w,x,y) \triangleq \hat \ell(Wx, y)$, with Lipschitz constant $L$, i.e., $\|\nabla_t \hat \ell(t, y)\| \le L$. Under the conditions of Theorem \ref{thm:pac-bound-general-2} with Gaussian prior distribution $p \sim N(0,\sigma_p)$ and variance $\sigma_p^2$ for which $\lambda \le \sqrt{\frac{m}{8}} / L \sigma_p$, we obtain $C(\lambda,p) \le kd \log(2)$.  
\end{corollary}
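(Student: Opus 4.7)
The plan is to start from Theorem~\ref{thm:pac-bound-general-2} and specialize it to the linear-model setting. Since $\ell(w,x,y) = \hat{\ell}(Wx,y)$, a single application of the chain rule gives $\nabla_x \ell(w,x,y) = W^{\top}\nabla_t \hat{\ell}(Wx,y)$, and the Lipschitz assumption on $\hat{\ell}$ immediately yields the \emph{data-independent} bound $\|\nabla_x\ell(w,x,y)\|^2 \le L^2 \|W\|_{\mathrm{op}}^2 \le L^2 \|W\|_F^2$. This is the first key simplification: the gradient norm depends only on the weights, not on $(x,y)$, so it can be pulled outside the expectation over the data.

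Next, I would deal with the inner integral in Eq.~\eqref{eq:paper_bound}. Because the $\|\nabla_x\ell\|^2$ factor no longer depends on $(x,y)$, the remaining quantity is
\[
\E_{(x,y)\sim D}\!\int_0^{\lambda/m} \frac{e^{-\alpha \ell(w,x,y)}}{M(\alpha)}\,d\alpha
= \int_0^{\lambda/m}\!\frac{\E_{(x,y)\sim D}[e^{-\alpha \ell(w,x,y)}]}{M(\alpha)}\,d\alpha
= \frac{\lambda}{m},
\nonumber
\]
since by definition $M(\alpha)=\E_{(x,y)\sim D}[e^{-\alpha\ell(w,x,y)}]$ and the integrand is identically $1$. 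Plugging this into Theorem~\ref{thm:pac-bound-general-2} collapses the bound to
\[
C(\lambda,p) \;\le\; \log \E_{w\sim p}\exp\!\left(\frac{2\lambda^2 L^2}{m}\,\|W\|_F^2\right).
\nonumber
\]

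The third step is a Gaussian moment-generating function computation. Under the prior $p\sim N(0,\sigma_p^2 I)$ on the $kd$ entries of $W$, the squared Frobenius norm is a scaled $\chi^2$ random variable with $kd$ independent components. Using the standard identity $\E_{Z\sim N(0,1)}\exp(cZ^2) = (1-2c)^{-1/2}$ for $c<1/2$ with $c = 2\lambda^2 L^2\sigma_p^2/m$, the expectation factorizes to
\[
\E_{w\sim p}\exp\!\left(\frac{2\lambda^2 L^2}{m}\|W\|_F^2\right)
= \left(1 - \frac{4\lambda^2 L^2 \sigma_p^2}{m}\right)^{-kd/2}.
\nonumber
\]
Imposing $\lambda \le \sqrt{m/8}/(L\sigma_p)$ forces $4\lambda^2 L^2\sigma_p^2/m \le 1/2$, so the right-hand side is at most $2^{kd/2}$, and taking logarithms yields the stated $O(kd\log 2)$ bound (my bookkeeping gives $\tfrac{kd}{2}\log 2$, so the constant in the statement absorbs the factor by which $\|W\|_{\mathrm{op}}^2\le\|W\|_F^2$ is loose plus some slack).

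The only delicate point is the passage from $\|W\|_{\mathrm{op}}$ to $\|W\|_F$: using the operator norm directly would give a tighter bound but its MGF under a Gaussian prior does not factor, whereas the Frobenius surrogate decouples into $kd$ independent one-dimensional Gaussian integrals that can be computed in closed form. That trade-off --- accepting the $\|W\|_{\mathrm{op}}\le\|W\|_F$ slack to obtain an explicit product formula --- is the main non-routine modelling choice, and it is what forces the restriction $\lambda\le \sqrt{m/8}/(L\sigma_p)$ that appears in the statement.
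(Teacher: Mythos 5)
Your proposal is correct and follows essentially the same route as the paper's proof: chain rule plus the Lipschitz assumption gives the data-independent bound $\|\nabla_x\ell\|^2\le L^2\sum_{y,j}w_{y,j}^2$, the inner integral collapses to $\lambda/m$ because $M(\alpha)=\E[e^{-\alpha\ell}]$, and the remaining Gaussian (scaled $\chi^2$) moment-generating-function computation under the constraint $\lambda\le\sqrt{m/8}/(L\sigma_p)$ yields the $kd\log 2$ bound. If anything your bookkeeping is slightly cleaner: the exact per-coordinate identity gives $\bigl(1-4\lambda^2L^2\sigma_p^2/m\bigr)^{-kd/2}\le 2^{kd/2}$, i.e.\ $\tfrac{kd}{2}\log 2$, whereas the paper's displayed intermediate bound $\bigl(m/(m-8L^2\lambda^2\sigma_p^2)\bigr)^{kd}$ is looser and degenerates at the endpoint of the stated $\lambda$-interval.
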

\begin{proof}
This bound is derived by applying Theorem \ref{thm:pac-bound-general-2}. We begin by realizing the gradient of $\hat \ell(Wx, y)$ with respect to $x$. Using the chain rule, $\nabla_x \hat \ell(Wx, y) = W^\top \nabla_{Wx} \hat \ell(Wx,y)$. Hence, we obtain for the gradient norm  $\|\nabla_x \hat \ell(Wx, y)\|^2 \le \|\hat \ell(Wx,y) \|^2 \cdot \sum_{y=1}^k \sum_{j=1}^d w_{y,j}^2 \le L^2 \sum_{y=1}^k \sum_{j=1}^d w_{y,j}^2 $. Plugging this result into Eq.~\eqref{eq:paper_bound} we obtain the following bound for its exponent: 
\[
&&\hspace{-1.2cm} \E_{(x, y) \sim D}\big [ \| \nabla_{x} \ell(w, x, y) \| ^2 \int_0^{\frac{\lambda}{m}} \frac{ e^{-\alpha \ell(w, x, y)} }{M(\alpha)}  d \alpha \big] \\
&&\hspace{-1cm}   \le L^2 \sum_{y=1}^k \sum_{j=1}^d w_{y,j}^2 \cdot \E_{(x, y) \sim D}\big [\int_0^{\frac{\lambda}{m}} \frac{ e^{-\alpha \ell(w, x, y)} }{M(\alpha)}  d \alpha \big] \\
&&\hspace{-1cm}   = L^2 \sum_{y=1}^k \sum_{j=1}^d w_{y,j}^2 \int_0^{\frac{\lambda}{m}} \frac{ \E_{(x, y) \sim D}\big [ e^{-\alpha \ell(w, x, y)} \big]}{M(\alpha)}  d \alpha. 
\]
Since $M(\alpha) \triangleq \E_{(x, y) \sim D}\big [ e^{-\alpha \ell(w, x, y)} \big]$, the ratio in the integral equals one and the integral $\int_0^{\frac{\lambda}{m}} d \alpha = \frac{\lambda}{m}$. Combining these results we obtain:
\[
C(\lambda,p) & \le & \log \Big(\E_{w \sim p} e^{\frac{2\lambda^2 L^2}{m} \sum_{y,j} w_{y,j}^2} \Big).
\]
Finally, whenever $\lambda L \sigma_p  \le \sqrt{m/8}$ we follow the Gaussian integral and derive the bound
\[
\E_{w \sim p} e^{\frac{2\lambda^2 L^2}{m} \sum_{y,j} w_{y,j}^2} \le \Big(\frac{m}{m-8 L^2 \lambda^2 \sigma_p^2}\Big)^{kd}.
\]

\end{proof}

The above corollary provides a PAC-Bayesian bound for classification using the NLL loss, and shows $C(\lambda,p)$ is sub-gamma in the interval $0 < \lambda < \sqrt{\frac{m}{8}} / L \sigma_p$. Interestingly, it can achieve a rate of $\lambda = m$, albeit with variance of the prior of $1/m$. In our experimental evaluation we show that it is better to achieve lower rate, i.e., $\lambda = \sqrt{m}$ while using a prior with a fixed variance, i.e., $\sigma_p \approx 0.1$. The above bound also extends the result of \citet{Alquier16} for binary hinge-loss to the multi-class hinge loss (cf. \citet{Alquier16}, Section 6). Unfortunately, the above bound cannot be applied to non-linear loss functions, since their gradient-norm is not bounded, as evident by the exploding gradients property in deep nets. 

\begin{figure}[t]
  \centering
	\includegraphics[width=0.5\textwidth,height=4.5cm, keepaspectratio]{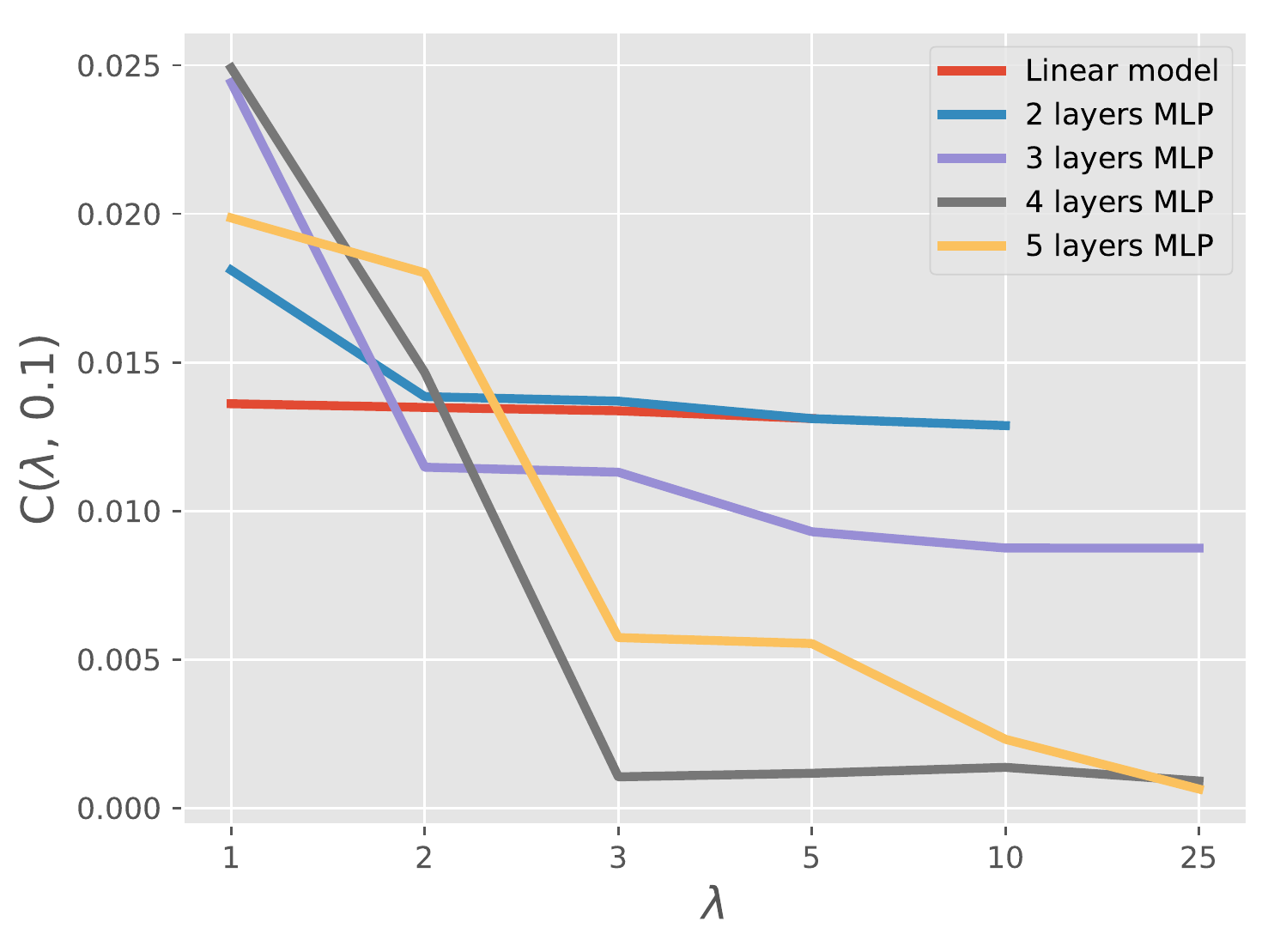}
	\caption{Estimating the complexity term $C(\lambda,p)$ in Eq.~\ref{eq:alquier} over MNIST, for a linear model and $4$ MLPs of depth $d=2,...,5$. We are able to compute the complexity term $C(\lambda,p)$ for $\lambda \le 25$ and even to smaller $\lambda$ in for the linear model ($\lambda \le 5$). Standard bounds for MNIST require $\lambda$ to be at least $240$, which is the square root of the training sample size. In all settings we use variance of 0.1 for the prior distribution over the model parameters.}
 	\label{fig:alquier}
\end{figure}

\subsection{Non-linear models}
In the following we derive a generalization bound for on-average bounded loss functions and on-average bounded gradient-norms. 
\begin{corollary}
\label{cor:nonlinear}
Consider smooth loss functions that are on-average bounded, i.e., $\E_{(x,y) \sim D} \ell(w,x,y) \le b$. Under the conditions of Theorem \ref{thm:pac-bound-general-2}, for any $0< \lambda \le m$ we obtain 
\[
\label{eq:mgf_bound}
C(\lambda,p) \le  \log\E_{w \sim p} e^{\frac{2 \lambda^2 e^{b} }{m} \E_{(x, y) \sim D}\big [ \| \nabla_{x} \ell(w, x, y) \| ^2\big]}.
\]
\end{corollary}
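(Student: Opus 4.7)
The plan is to start from the bound of Theorem~\ref{thm:pac-bound-general-2} and control the inner integral $\int_0^{\lambda/m} e^{-\alpha \ell(w,x,y)} / M(\alpha)\, d\alpha$ uniformly in $(x,y)$, so that it factors out of the expectation over the data and leaves only the gradient-norm term inside.

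First, I would use the non-negativity of the loss (which holds for the NLL and is the standard setting) to get $e^{-\alpha \ell(w,x,y)} \le 1$ for every $\alpha \ge 0$ and every $(x,y)$. Next, to control the denominator, I would apply Jensen's inequality to the convex map $t \mapsto e^{-\alpha t}$, which yields
\[
M(\alpha) = \E_{(x,y)\sim D}\, e^{-\alpha \ell(w,x,y)} \ge e^{-\alpha L_D(w)} \ge e^{-\alpha b},
\]
where the last step uses the on-average boundedness assumption $L_D(w) \le b$. Combining the two estimates gives $e^{-\alpha \ell(w,x,y)} / M(\alpha) \le e^{\alpha b}$, a bound independent of $(x,y)$.

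The third step exploits the hypothesis $\lambda \le m$: then the integration variable satisfies $\alpha \in [0,\lambda/m] \subseteq [0,1]$, so $e^{\alpha b} \le e^{b}$ (for $b \ge 0$; otherwise the bound is even easier). Integrating gives
\[
\int_0^{\lambda/m} \frac{e^{-\alpha \ell(w,x,y)}}{M(\alpha)}\, d\alpha \;\le\; \frac{\lambda\, e^{b}}{m}
\]
uniformly in $(x,y)$. Plugging this into Eq.~\eqref{eq:paper_bound} pulls the factor $\lambda e^{b}/m$ outside the inner expectation $\E_{(x,y)\sim D}$, multiplies the outer $2\lambda$ to produce the prefactor $2\lambda^{2} e^{b}/m$, and yields exactly the claimed bound on $C(\lambda,p)$.

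I do not anticipate a substantive obstacle: the argument is essentially two one-line estimates (non-negativity of $\ell$ and a Jensen lower bound on $M(\alpha)$) combined with the interval constraint $\lambda/m \le 1$. The only subtlety is an interpretational one, namely confirming that the on-average bound $L_D(w) \le b$ should be read as holding for every $w$ in the support of the prior $p$ (so that the Jensen step is valid before taking the outer expectation over $w\sim p$); this is the natural reading, and it is what allows the integral bound to be applied pointwise in $w$ inside the logarithm.
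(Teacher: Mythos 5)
Your proposal is correct and follows essentially the same route as the paper's proof: bound the numerator by $1$ via non-negativity of the loss, lower-bound $M(\alpha)$ via Jensen together with the on-average bound $L_D(w)\le b$, and use $\lambda/m\le 1$ to obtain the uniform factor $e^{b}$ before integrating. The only (immaterial) difference is that you apply Jensen at each $\alpha$ to get $M(\alpha)\ge e^{-\alpha b}$, whereas the paper first uses monotonicity of $\alpha\mapsto e^{-\alpha\ell}$ to reduce to $M(\alpha)\ge M(1)$ and then applies Jensen once at $\alpha=1$; both give $e^{-\alpha\ell(w,x,y)}/M(\alpha)\le e^{b}$ and hence the same final bound.
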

\begin{proof}
This bound is derived by applying Theorem \ref{thm:pac-bound-general-2} and bounding $\int_0^1 \frac{ e^{-\alpha \ell(w, x, y)} }{M(\alpha)}  d \alpha \le e^b $. We derive this bound in three steps: 
\begin{itemize}[noitemsep,topsep=0pt,parsep=0pt,partopsep=0pt]
\item From $\ell(w,x,y) \ge 0$ we obtain $e^{-\alpha \ell(x,x,y)} \le 1$. 
\item We lower bound $M(\alpha) \ge M(1)$ for any $0 \le \alpha \le \lambda/m$: First we note that $0 < \lambda \le m$, therefore we consider $0 \le \alpha \le 1$. Also, since $\ell(w,x,y) \ge 0$ the function $e^{-\alpha \ell(w,x,y)}$ is monotone in $\alpha$ within the unit interval, i.e.,  for $0 \le \alpha_1  \le \alpha_2 \le 1$ there holds $e^{-\alpha_1 \ell(w,x,y)} \ge e^{-\alpha_2 \ell(w,x,y)}$ and consequently $M(\alpha) \ge M(1)$ for any $\alpha \le 1$.  
\item The assumption $\E_{(x,y) \sim D} [-\ell(w,x,y)] \ge -b$ and the monotonicity of the exponential  function result in the lower bound $e^{\E_{(x,y) \sim D} [-\ell(w,x,y)] } \ge e^{-b}$. From  convexity of the exponential function, $M(1) = \E_{(x,y) \sim D} e^{-\ell(w,x,y) } \ge  e^{\E_{(x,y) \sim D} [-\ell(w,x,y)] }$ and the lower bound $M(1) \ge e^{-b}$ follows. 
\end{itemize}
Combining these bounds we derive the upper bound $\int_0^1 \frac{ e^{-\alpha \ell(w, x, y)} }{M(\alpha)}  d \alpha \le \int_0^1 \frac{ 1 }{e^{-b}}  d \alpha = e^b$, and the result follows. 
\end{proof}
The above derivation upper bounds the complexity term $C(\lambda,p)$ by the expected gradient-norm of the loss function, i.e., the flow of its gradients through the architecture of the model. It provides the means to empirically show that $C(\lambda,p)$ is sub-gamma (see Section~\ref{sec:subgamma}). In particular, we show empirically that the rate of the bound $\lambda$ can be as high as $m$, dependent on the gradient-norm. This is a favorable property, since the convergence of the bound scales as $1/\lambda$. Therefore, one would like to avoid exploding gradient-norms, as this effectively harms the true risk bound. While one may achieve a fast rate bound by forcing the gradient-norm to vanish rapidly, practical experience shows that vanishing gradients prevent the deep net from fitting the model to the training data when minimizing the empirical risk. In our experimental evaluation we demonstrate the influence of the expected gradient-norm on the bound of the true risk.

\section{Experiments}
\label{sec:exp}
\begin{figure*}[!t]
  \centering
  \subfloat{\includegraphics[width=0.3\textwidth, height=5cm, keepaspectratio]{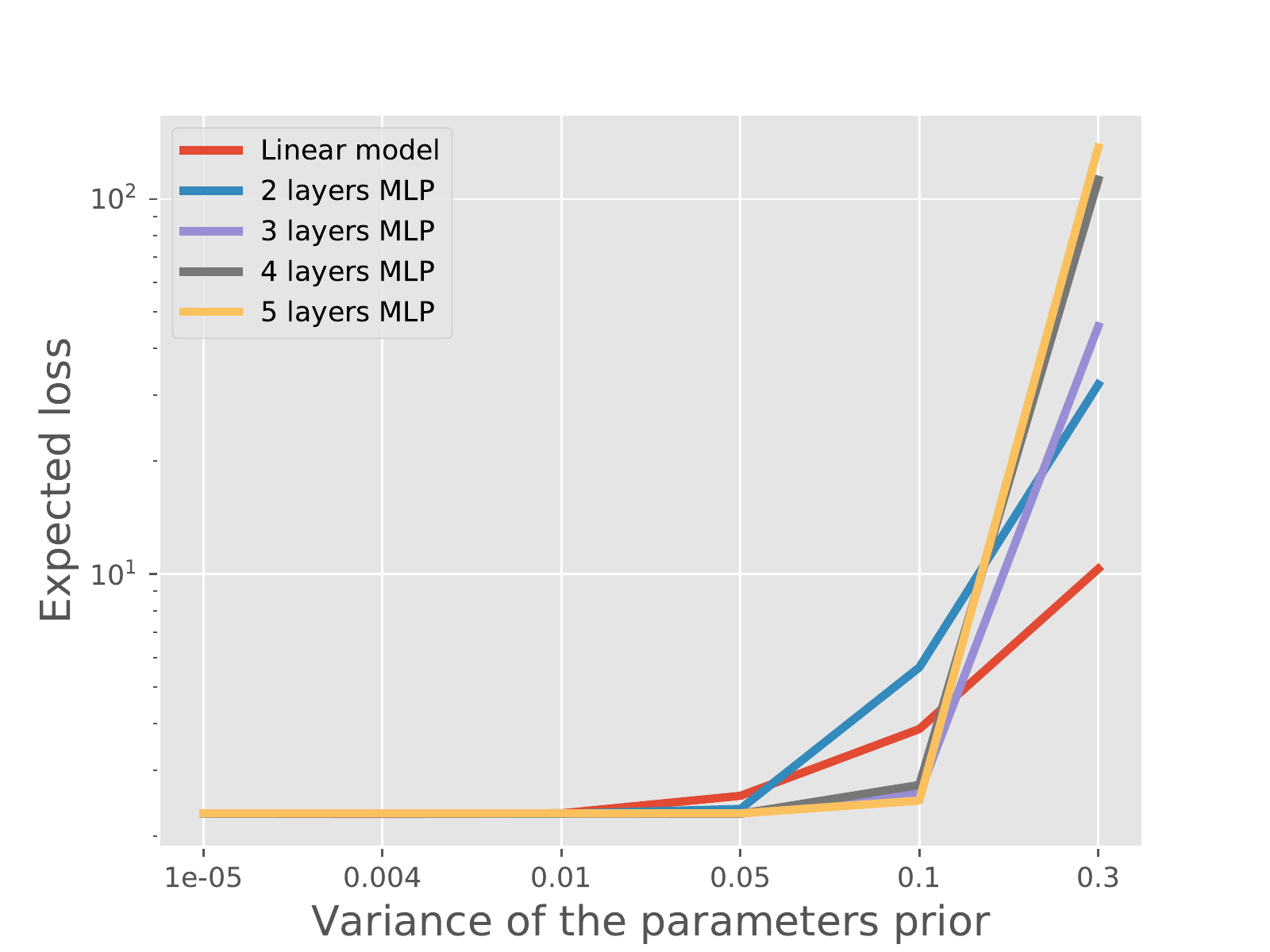}}
  \subfloat{\includegraphics[width=0.3\textwidth, height=5cm, keepaspectratio]{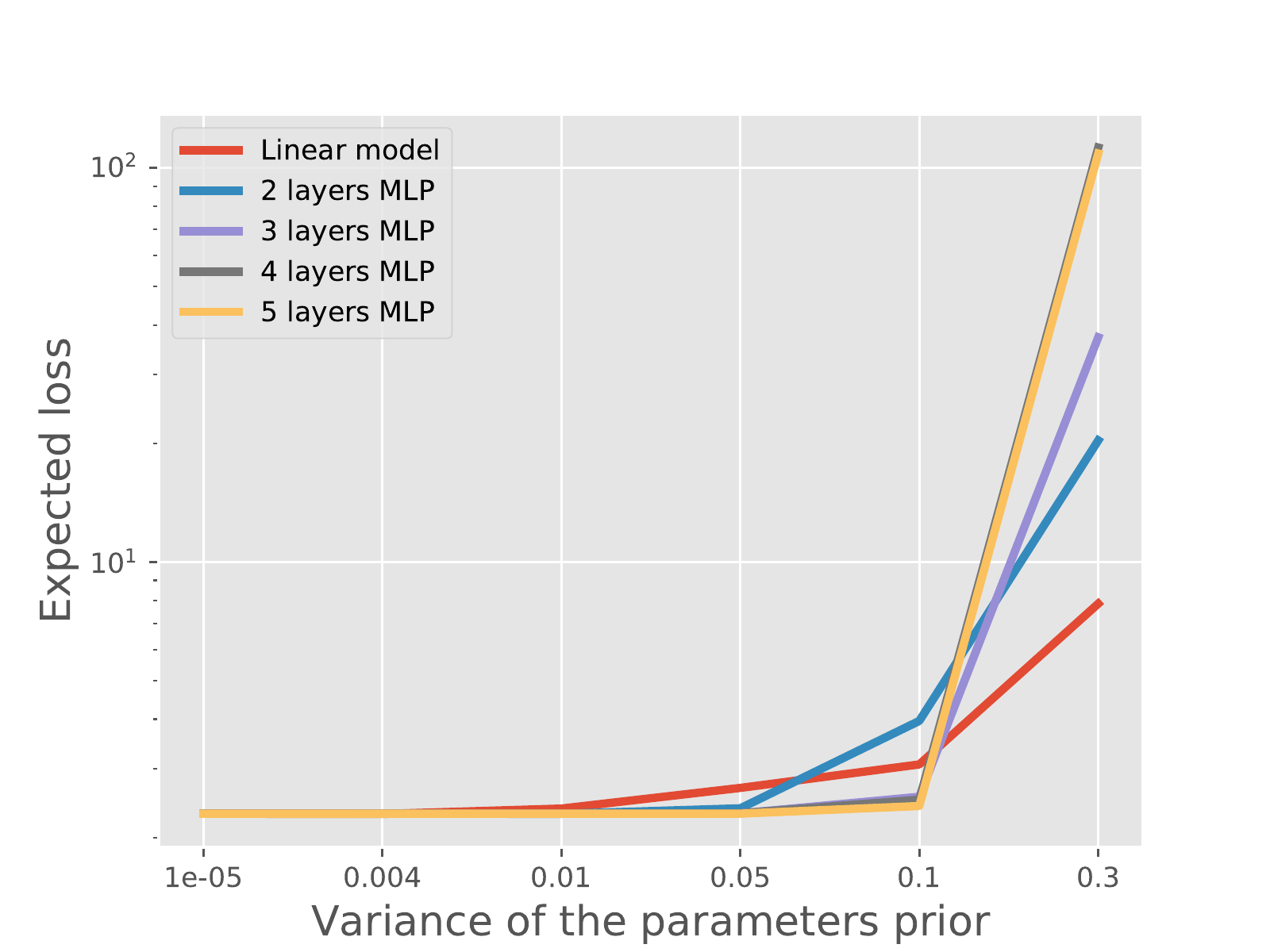}}
  \subfloat{\includegraphics[width=0.3\textwidth, height=5cm, keepaspectratio]{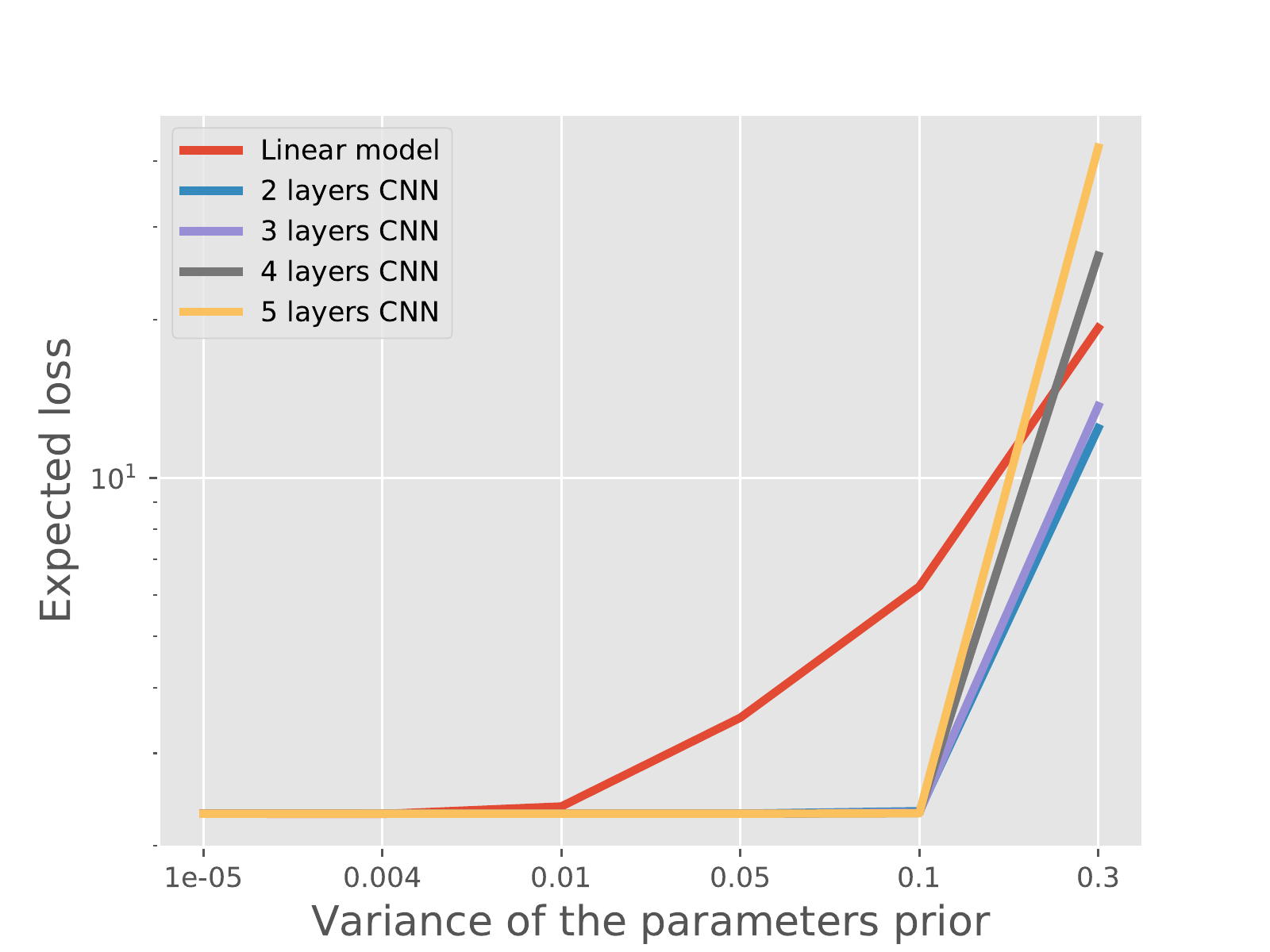}}
  \caption{Verifying the assumption in Corollary \ref{cor:nonlinear} that the loss function is on-average bounded, i.e., $\E_{(x,y) \sim D} \ell(w,x,y) \le b$. We tested this assumption on MNIST (left), Fashion-MNIST (middle), using MLPs and CIFAR10 (right) using CNNs. The loss is on-average bounded, while being dependent on the variance of the prior. For variance up to $0.1$, the loss is on-average bound is about $1$.}
     	\label{fig:exploss}
\end{figure*}

\begin{figure*}[t]
  \centering
  \subfloat{\includegraphics[width=0.3\textwidth, height=5cm, keepaspectratio]{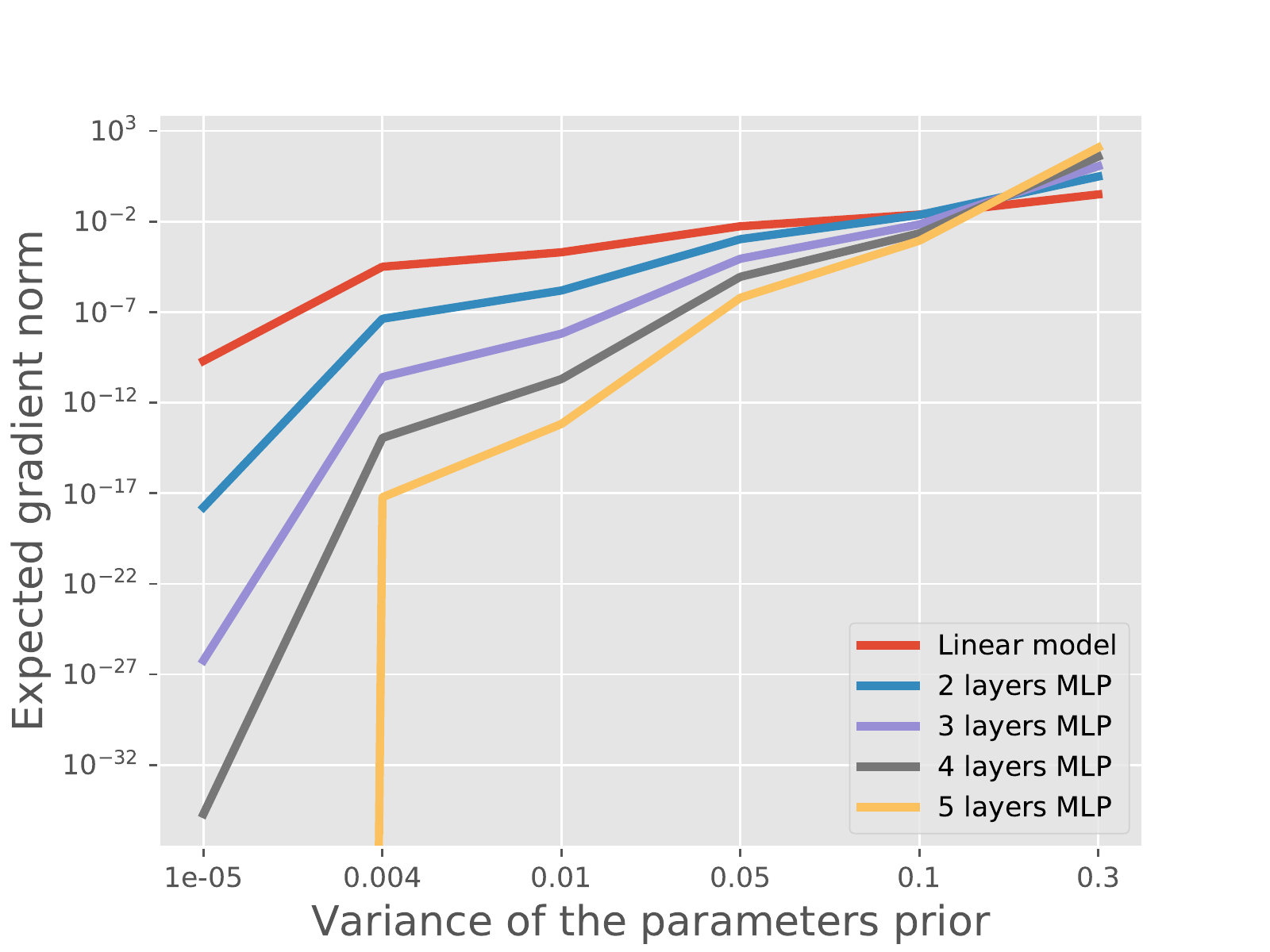}}
  \subfloat{\includegraphics[width=0.3\textwidth, height=5cm, keepaspectratio]{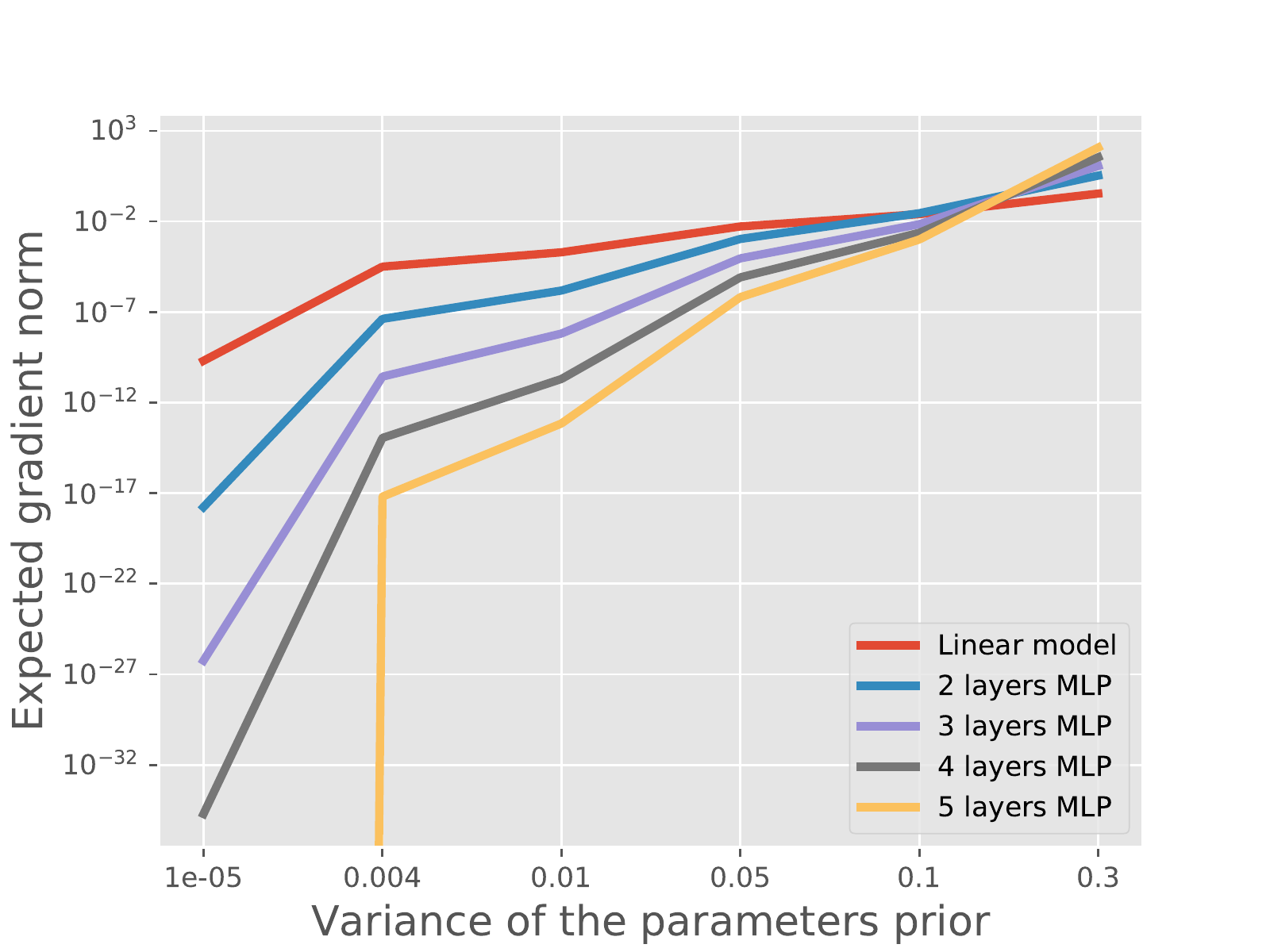}}
  \subfloat{\includegraphics[width=0.3\textwidth, height=5cm, keepaspectratio]{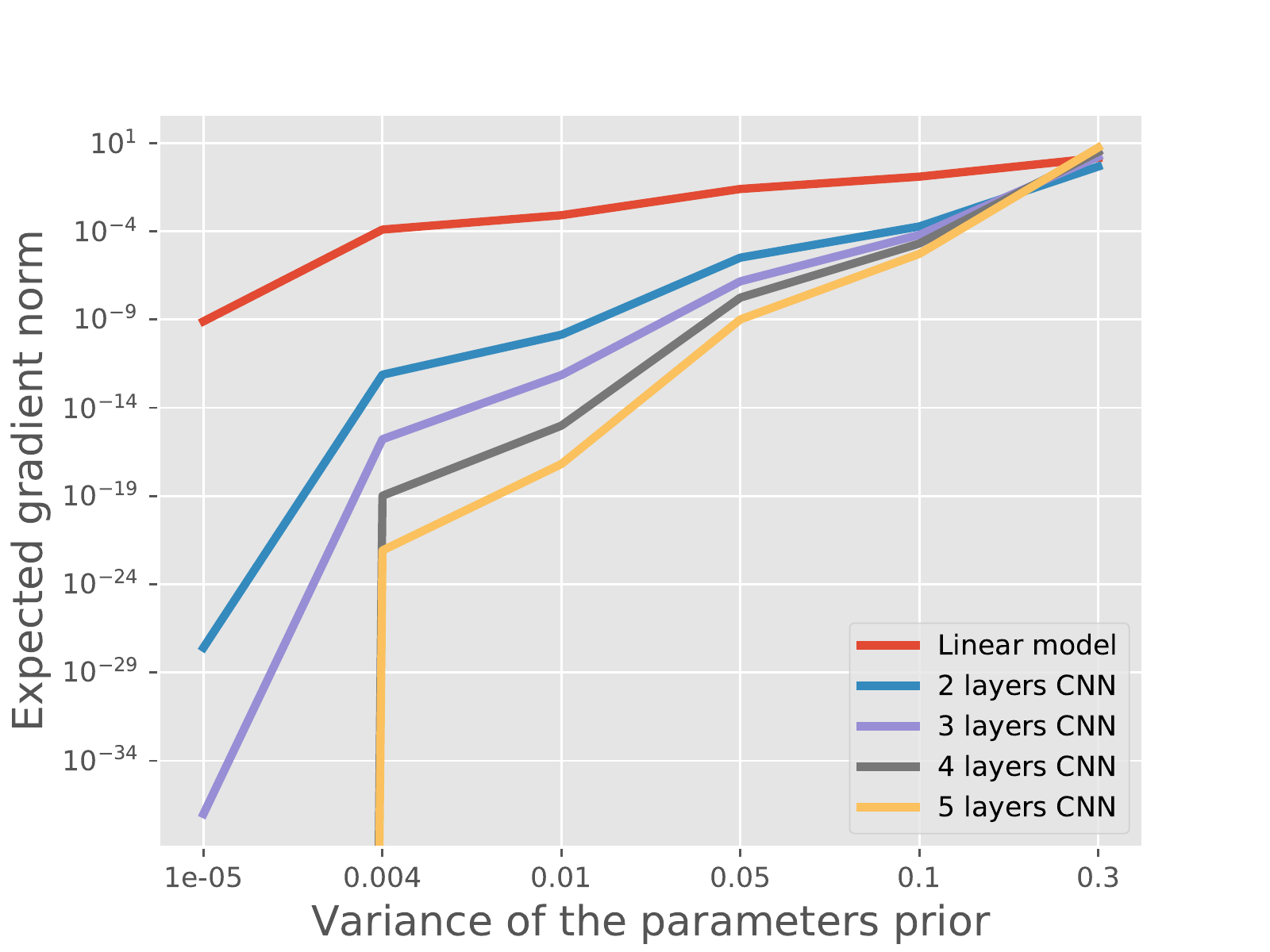}}
  \caption{Estimating $\E_{(x, y) \sim D}\big [ \| \nabla_{x} \ell(w, x, y) \| ^2 \big]$ as a function of the different variance levels for the prior distribution $p$. Results are reported for MNIST (left), Fashion-MNIST (middle), using MLPs and CIFAR10 (right) using CNNs. The linear model has the largest expected gradient-norm, since the Lipschitz condition considers the worst-case gradient-norm, see Corollary \ref{cor:logistic}. The gradient-norm gets smaller as a function of the depth, due to the vanishing gradient property. As a result, deeper nets can have faster convergence rate, i.e., use larger values of $\lambda$ can be used in the generalization bound.}
  	\label{fig:expgrad}
\end{figure*}

\begin{figure*}[!t]
  \centering
  \subfloat{\includegraphics[width=0.25\textwidth, height=4.5cm, keepaspectratio]{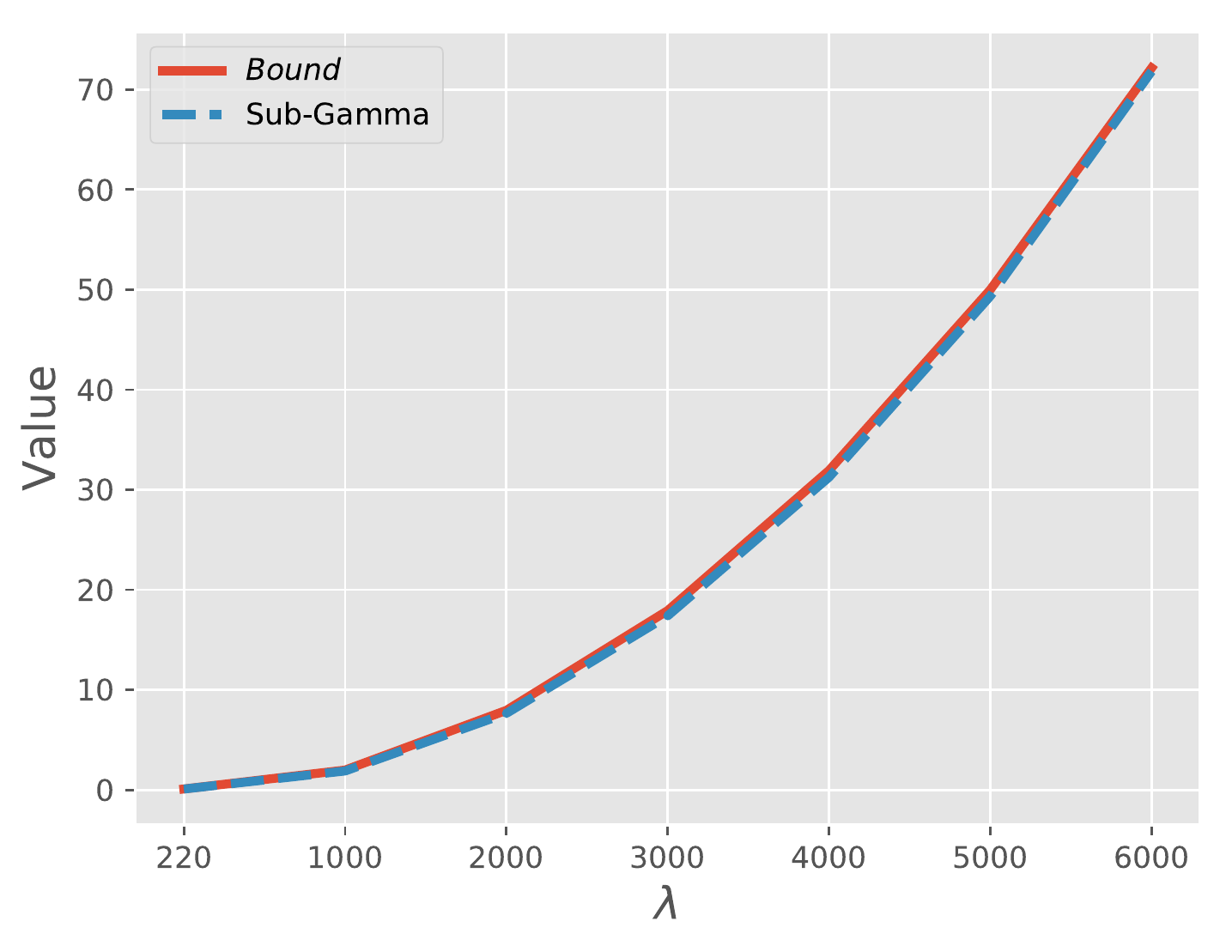}}
  \subfloat{\includegraphics[width=0.25\textwidth, height=4.5cm, keepaspectratio]{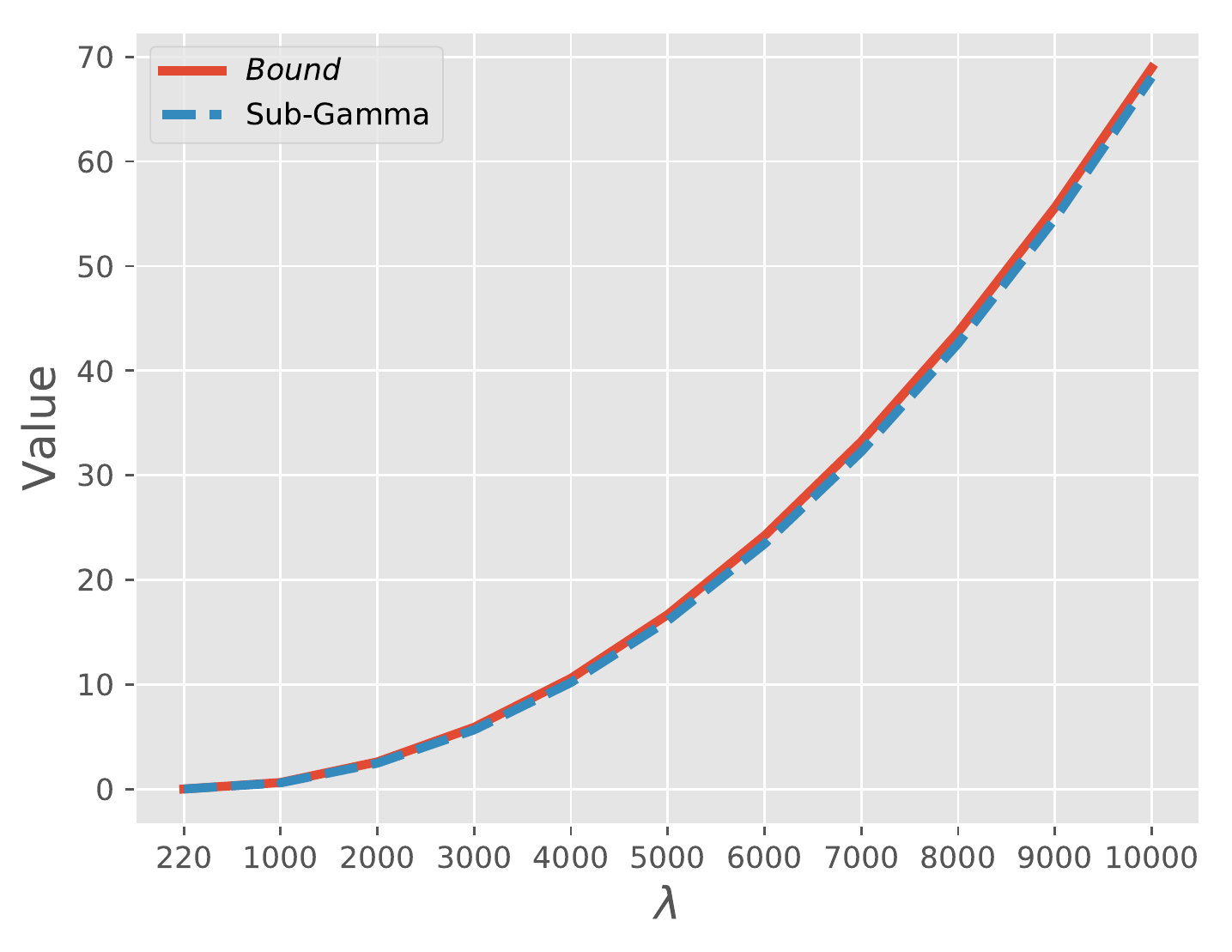}}
  \subfloat{\includegraphics[width=0.25\textwidth, height=4.5cm, keepaspectratio]{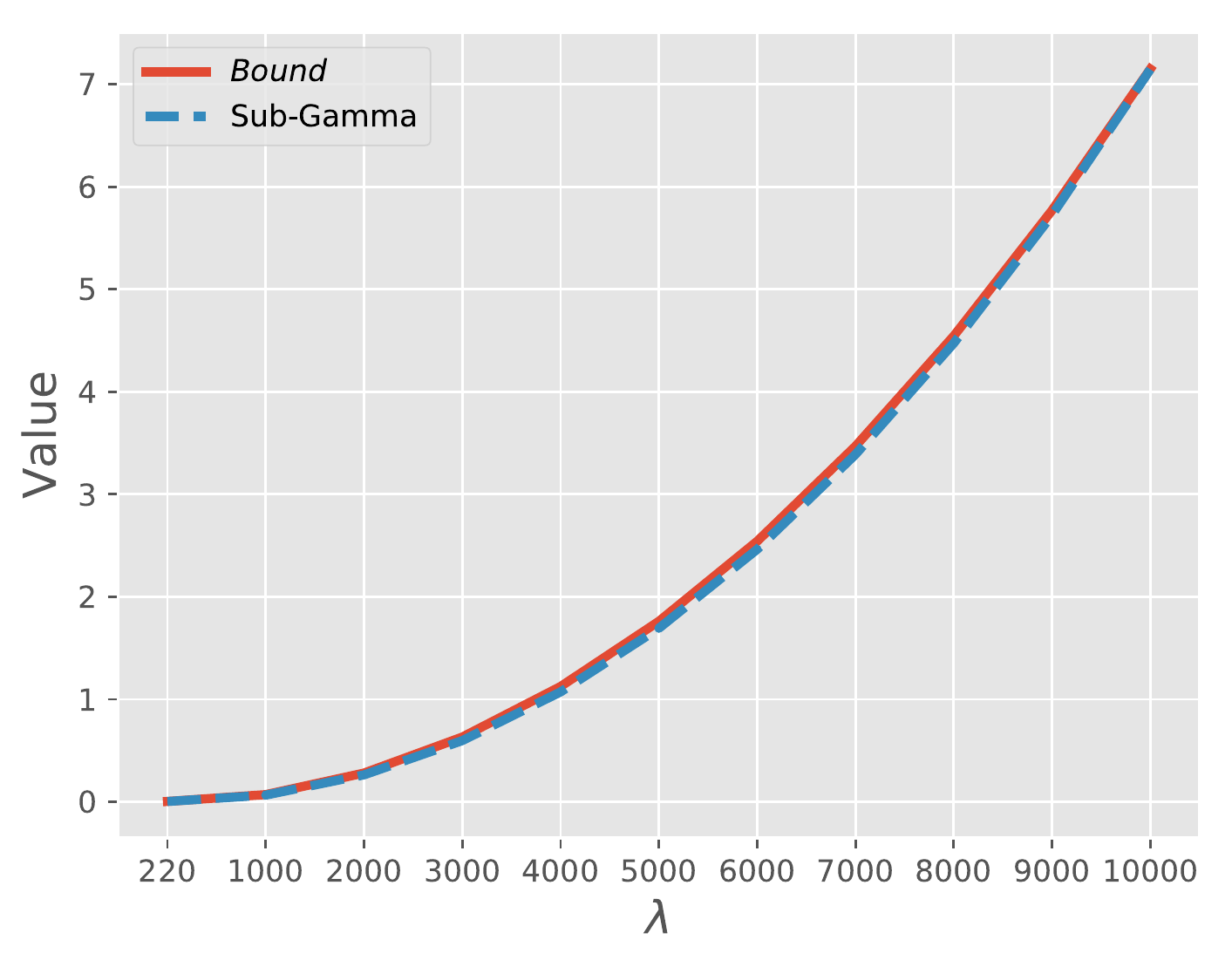}}
  \subfloat{\includegraphics[width=0.25\textwidth, height=4.5cm, keepaspectratio]{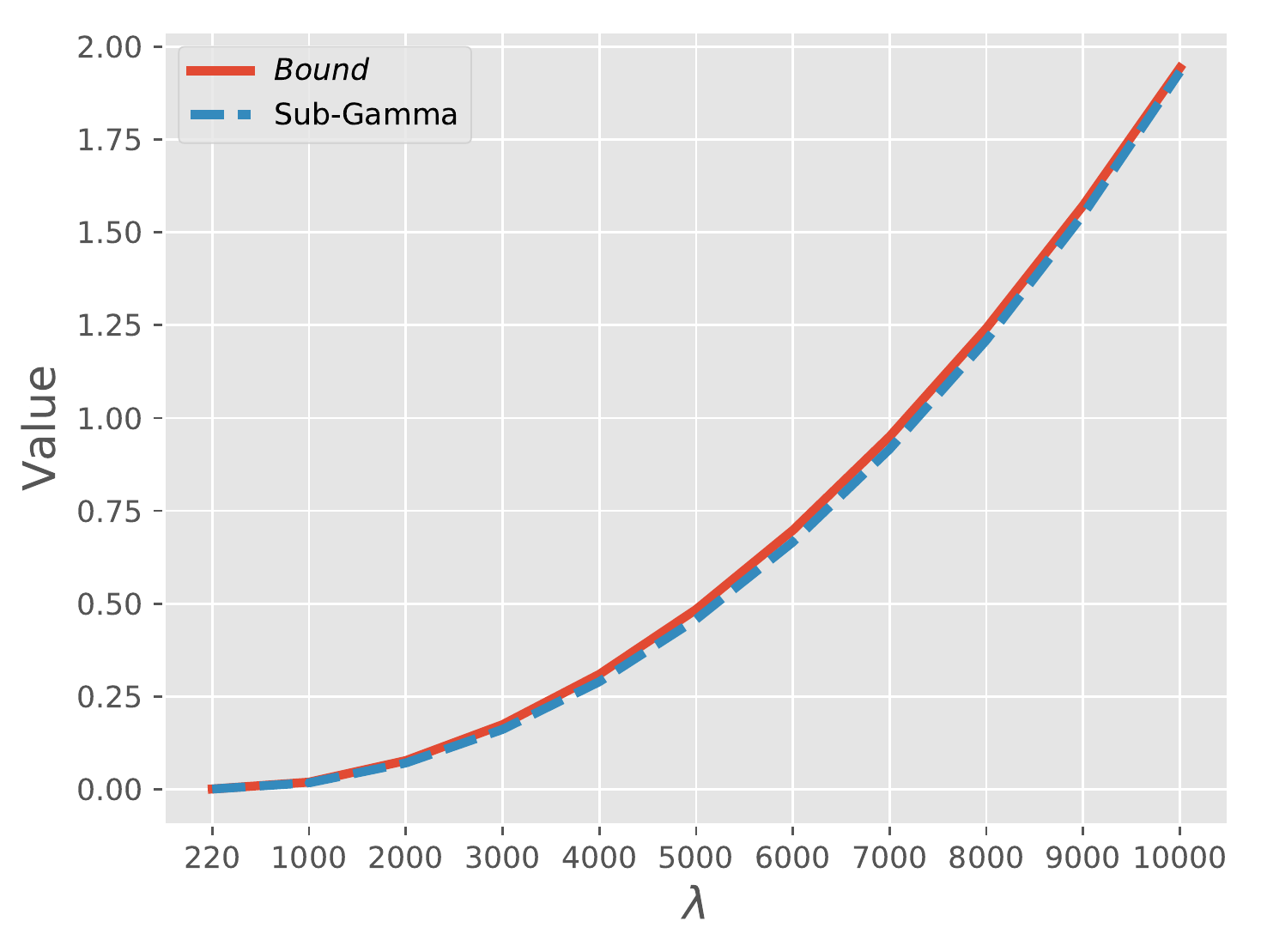}}
  \caption{The proposed bound as a function of $\lambda$ for MLP models using two, three, four, and five layers , depicted from left to right, (the left figure is for two layers, the second left is for three layers and so on). Notice, this suggests that the random variables $L_D(w) - L_S(w)$ for all presented MLP models are sub-gamma. We obtain the results for all models using the MNIST dataset. The parameter for the sub-gamma fit is $c=1\mathrm{e}{-5}$ for all models using different scaling factors.}
   \label{fig:gamma}
\end{figure*}

In this section we perform an experimental evaluation of the our PAC-Bayesian bounds, both for linear models and non-linear models. We begin by verifying our assumptions: (i) The PAC-Bayesian bound in Theorem \ref{thm:alquir} cannot be computed for large values of $\lambda$. (ii) Although the NLL loss is unbounded, it is on-average bounded. Next, we study the behavior of the complexity term $C(\lambda,p)$ for different architectures, both for linear models and deep nets. We show that the random variable $L_D(w) - L_S(w)$ is sub-gamma, namely that $C(\lambda,p) \le \frac{\lambda^2 v}{2(1-\lambda c)}$  for every $\lambda$ such that $0 < \lambda < 1/c$. Importantly, we show that $1/c$ which relates to the rate of convergence of the bound is determined by the architecture of the deep net. Lastly, we demonstrate the importance of $C(\lambda,p)$ in the learning process, balancing the three terms of (i) the empirical risk; (ii) the KL divergence; and (iii) the complexity term. 

\paragraph{Implementation details.} We use multilayer perceptrons (MLPs) for the MNIST and Fashion-MNIST dataset~\cite{xiao2017}. We use convolutional neural networks (CNNs) the CIFAR-10 dataset. In all models we use the ReLU activation function. We optimize the NLL loss function using SGD with a learning rate of 0.01 and a momentum value of 0.9 in all settings for 50 epochs. We use mini-batches of size 128 and did not use any learning rate scheduling. For the ResNet experiments we optimize an 18-layers ResNet model on the CIFAR-10 dataset, using Adam optimizer with learning rate of 0.001 for 150 epochs where we halve the learning rate every 50 epochs using batch-size of 128.

\subsection{Verify Assumptions}

We start by empirically demonstrating the numerical instability of computing the complexity term $C(\lambda,p)$ in Eq.~\ref{eq:alquier}. This numerical instability occurs due to the exponentiation of the random variables, namely $e^{\lambda(L_D(w) - L_S(w))}$, which quickly goes to infinity as $\lambda$ grows. We estimated Eq.~\ref{eq:alquier} over MNIST, for $5$ MLPs of depth $d=1,...,5$, where MLP of depth $1$ is a linear model. For a fair comparison we changed layers' width to reach roughly the same number of parameters in each model (except for the linear case). For these architectures evaluated $C(\lambda,p)$ for different values of $\lambda$, using variance of 0.1 over the prior distribution. The results are depicted in Figure~\ref{fig:alquier}. One can see that we are able to compute the complexity term $C(\lambda,p)$ for $\lambda \le 25$ and even to smaller $\lambda$ for the linear model ($\lambda \le 5$). Standard bounds for MNIST require $\lambda$ to be in the interval $\sqrt{m} < \lambda < m$, where $m$ is the training sample size ($m=60,000$ for MNIST). We observe that while computing $C(\lambda,p)$ the term $e^{\lambda L_D(w)}$ goes to infinity while $e^{-\lambda L_S(w)}$ goes to zero, and they are not able to balance each other. In our derivation this is solved by looking at the gradient to emphasize their change, see Eq.~\eqref{eq:insight}. 

\begin{figure}[!t]
  \centering
  \subfloat{\includegraphics[width=0.5\textwidth, height=4.5cm, keepaspectratio]{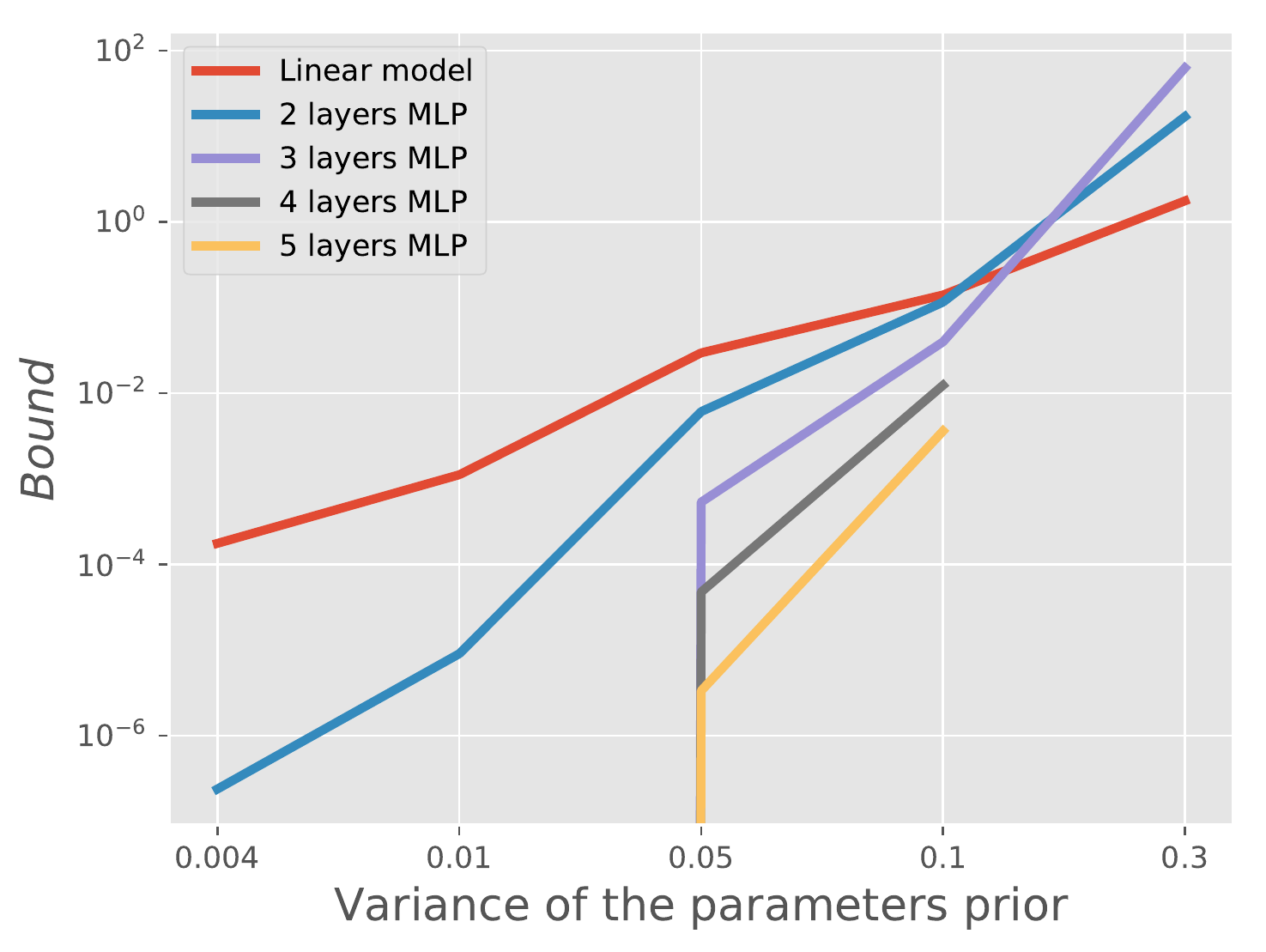}}
    \hfill
    \subfloat{\includegraphics[width=0.5\textwidth, height=4.5cm, keepaspectratio]{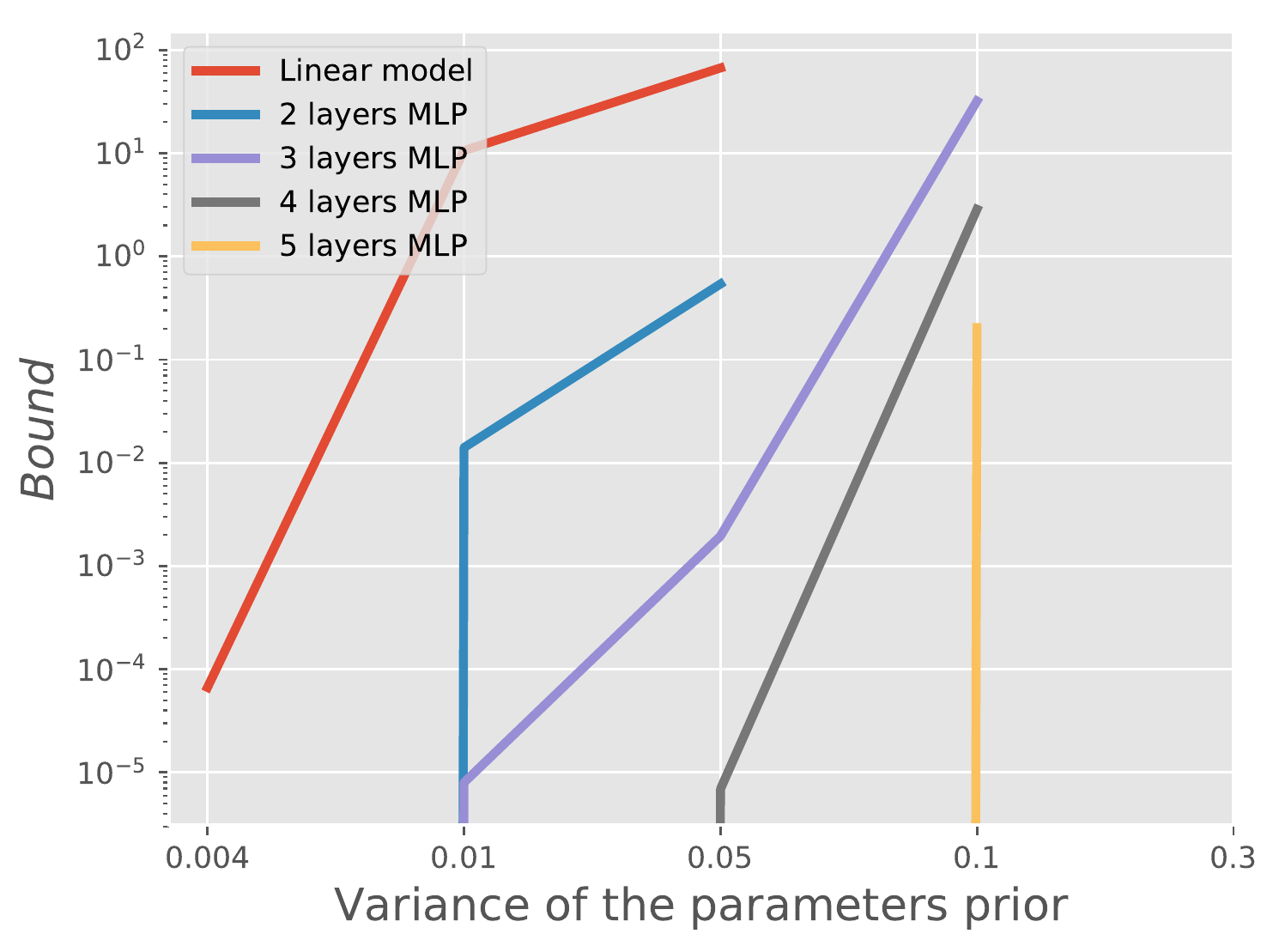}}
  \caption{Our bound on $C(\lambda,p)$ as a function of variance levels for the prior distribution over the models parameters, for $\lambda = \sqrt{m}$ (top) and $\lambda = m$ (bottom). One can see that the bound gets larger when the variance of the prior increases. This plot conforms the practice of Bayesian deep nets that set the variance of the prior up to $0.1$. Notice, the five layers for the $\lambda = m$, it get value only for variance of $0.1$, below that value the bound is zero, and above that value the bound explodes.}
   	\label{fig:mgf_bound}
\end{figure}

In Corollary \ref{cor:nonlinear} we assume that although the loss is unbounded, it is on-average bounded, meaning $\E_{(x,y) \sim D} \ell(w,x,y) \le b$. We tested this assumption on MNIST, Fashion-MNIST using MLPs of depth $d=1,...,5$ and CIFAR10 using CNNs of depth $d=2,...,5$, where for in the CNN models $d$ is the number of convolutional layers, and we include a max pool layer after each convolutional layer. In all CNN models we include an additional output layer to be a fully connected one. To span the possible weights we sampled them from a normal prior distribution with different variance. The results appear in Figure~\ref{fig:exploss}. We observe that the loss is on-average bounded by $10$, while being dependent on the variance of the prior. Moreover, for variance up to $0.1$, the on-average loss bound is about $1$ and its affect on the complexity term $C(\lambda,p)$ is minimal. Notice, although the expected loss is increasing for high variance levels, these are not being used for initialize deep nets, considering common initialization techniques.

\subsection{Complexity of Neural Nets}

 \begin{table*}[t]
 \caption{We optimize MLP models of different depth levels, where one corresponds to the linear model, two corresponds to two layers and so on. We report the avg. test loss, avg. train loss,  bound on $C(\lambda,p)$, and the KL value for the MNIST dataset.}
 \begin{center}
 \begin{tabular}{l|l|ccccccc}
 \hline
 & \bf Prior Variance &\bf 0.0004 & \bf 0.01 &  \bf 0.05 & \bf 0.1 &  \bf 0.3 & \bf 0.5 & \bf 0.7\\
 \hline
 \multirow{5}{*}{\rotatebox[origin=c]{90}{One}}
 & \bf Test Loss 					  & 0.753  		& 0.442  & 0.277    & 0.271  & 0.324    & 0.408 	& 0.677\\
 & \bf Train Loss 					  & 0.731  	    & 0.424  & 0.273    & 0.253  & 0.292    & 0.373 	& 0.697\\
 & \bf Bound on $C(m,p)$        & 10.57  		& 66.92  & inf 		& inf 	 & inf 		& inf 		& inf\\
 & \bf Bound on $C(\sqrt{m},p)$ & 0.0002  		& 0.001  & 0.029    & 0.141  & 1.76     & 6.33  	&  15.18\\
& \bf KL 							  & 20027     	& 10776  & 2561     & 1478	 & 3886	    & 7447 		& 8995\\
   \hline 
 \multirow{5}{*}{\rotatebox[origin=c]{90}{Two}}
 & \bf Test Loss 					  & 1.743  		& 0.549  & 0.104   & 0.066  & 0.127   & 0.236 	& 0.602\\
 & \bf Train Loss 					  & 1.752  	    & 0.569  & 0.095   & 0.038  & 0.056   & 0.155 	& 0.425\\
 & \bf Bound on $C(m,p)$        & 0.0 			& 0.014 & 0.540	& inf 	 & inf 		& inf 		& inf\\
 & \bf Bound on $C(\sqrt{m},p)$ & 0.0  		& 0.0	  & 0.006   & 0.115  & 16.95   & inf 		& inf\\
& \bf KL 							  & 25848     	& 30469   &  7369   & 7834	 & 96976    & 166638 	& 244255\\
 \hline
  \multirow{5}{*}{\rotatebox[origin=c]{90}{Three}}
 & \bf Test Loss 					  & 2.3  		& 2.3  	& 0.091  	& 0.062 & 0.136   & 0.294 	& 1.319\\
 & \bf Train Loss 					  & 2.3  	    & 2.3  	& 0.078  	& 0.027 & 0.067   & 0.268 	& 1.173\\
 & \bf Bound on $C(m,p)$        & 0.0  		& 0.0 		& 0.002  	& 31.99  & inf 		& inf 		& inf\\
 & \bf Bound on $C(\sqrt{m},p)$ & 0.0  		& 0.001  	& 0.001  	& 0.041 & 62.73   	& inf 		& inf\\
& \bf KL 							  & nan     		& 10776   	& 9480   	& 8215	 & 95984	& 175134 	& 226237\\
 \hline
  \multirow{5}{*}{\rotatebox[origin=c]{90}{Four}}
 & \bf Test Loss 					  & 2.3  		& 2.3  		& 0.083  	& 0.067 & 0.132   & inf 	& inf\\
 & \bf Train Loss 					  & 2.3  	    & 2.3	  	& 0.064  	& 0.022 & 0.081   & inf 	& inf\\
 & \bf Bound on $C(m,p)$        & 0.0  		& 0.0 		& 0.0  		& 2.855  & inf 		& inf 	& inf\\
 & \bf Bound on $C(\sqrt{m},p)$ & 0.0  		& 0.0  		& 0.0  		& 0.012 & inf   	& inf 	& inf\\
& \bf KL 							  & nan     		& nan   		& 10849   	& 8239	 & 113943	& nan 	& nan\\
 \hline
  \multirow{5}{*}{\rotatebox[origin=c]{90}{Five}}
  & \bf Test Loss 					  & 2.3  		& 2.3  	& 0.087  	& 0.066 	& 0.133   	& inf 	& inf\\
 & \bf Train Loss 					  & 2.3  	    & 2.3  	& 0.055  	& 0.019 	& 0.101   	& inf 	& inf\\
 & \bf Bound on $C(m,p)$        & 0.0  		& 0.0 		& 0.0  		& 0.204  	& inf 		& inf 	& inf\\
 & \bf Bound on $C(\sqrt{m},p)$ & 0.0  		& 0.0  		& 0.0  		& 0.004 	& inf   	& inf 	& inf\\
& \bf KL 							  & nan     		& nan   		& 11800   	& 9090	 	& 140305	& nan 	& nan\\
 \hline 
 \end{tabular}
 \label{tab:optimize}
 \end{center}
 \end{table*}
 
Next we turn to estimate our bounds of $C(\lambda,p)$, both for the linear models and nonlinear models, corresponding to Corollary \ref{cor:logistic} and Corollary \ref{cor:nonlinear}. We use the same architectures as mentioned above. The bound on $C(\lambda,p)$ is controlled by the expected gradient-norm $\E_{(x, y) \sim D}\big [ \| \nabla_{x} \ell(w, x, y) \| ^2 \big]$. Figure~\ref{fig:expgrad} presents the expected gradient-norm as a function of different variance levels for the prior distribution $p$ over the models parameters. For the linear model we used the bound in Corollary \ref{cor:logistic}. One can see that the linear model has the largest expected gradient-norm, since the Lipschitz condition considers the worst-case gradient-norm. One also can see that the deeper the network, the smaller its gradient-norm. This is attributed to the gradient vanishing property. As a result, deeper nets can have faster convergence rate, i.e., use larger values of $\lambda$ in the generalization bound, since the vanishing gradients creates a contractivity property that stabilize the loss function, i.e., reduces its variability. However, this comes at the expanse of the expressivity of the deep net, since vanishing gradients cannot fit the training data in the learning phase. This is demonstrated in the next experiment.    

Figure~\ref{fig:mgf_bound} presents the bound on $C(\lambda,p)$ as a function of the variance levels for the prior distribution over the models parameters. One can see that the bound gets larger when the variance of the prior increases. Another thing to note is that the bound is often explodes when the variance is larger than $0.3$. This conforms with Corollary \ref{cor:logistic}, which $C(\lambda,p)$ is unbounded for variance larger than $0.5$. This plot conforms the practice of Bayesian deep nets that set the variance of the prior up to $0.1$. 

\subsection{Sub-Gamma Approximation}
\label{sec:subgamma}
In Section~\ref{sec:linear} we proved that the proposed bound is sub-gamma for the linear case. Unfortunately, such proof can not be directly applied to the non-linear case. Hence, we empirically demonstrate that the proposed bound over $C(\lambda, p)$ is indeed sub-gamma for various model architectures. For that we used the same models architectures as before using the MNIST dataset. Results are depicted in~Figure~\ref{fig:gamma}. Notice, similar to the ResNet model, the proposed bound is sub-gamma in all explored settings using $c=1e-5$ with different scaling factors.

\subsection{Optimization}

Lastly, in order to better understand the balance between all components composed the proposed generalization bound we optimize all five MLP models presented above using the MNIST dataset, and computed the average training loss, average test loss, KL divergence, and the bound on $C(\lambda,p)$, bound using $\lambda=m$ and $\lambda=\sqrt{m}$. We repeat this optimization process for various variance levels over the prior distribution over the model parameters. Results for the MNIST dataset are summerized in Table~\ref{tab:optimize}, more experimental results can be found in Section~\ref{sec:appendix_res} in the Appendix. 

Results suggests that using variance levels of [0.05, 0.1] produce the overall best performance across all depth levels. This findings is consistent with our previous results which suggest that below this value the Bound goes to zero, hence make a good generalization on the expense of model performance. However, larger variance levels may cause the bound to explode and as a results makes the optimization problem harder.

\begin{figure*}[!t]
  \centering
  \subfloat{\includegraphics[width=0.3\textwidth, height=4.5cm, keepaspectratio]{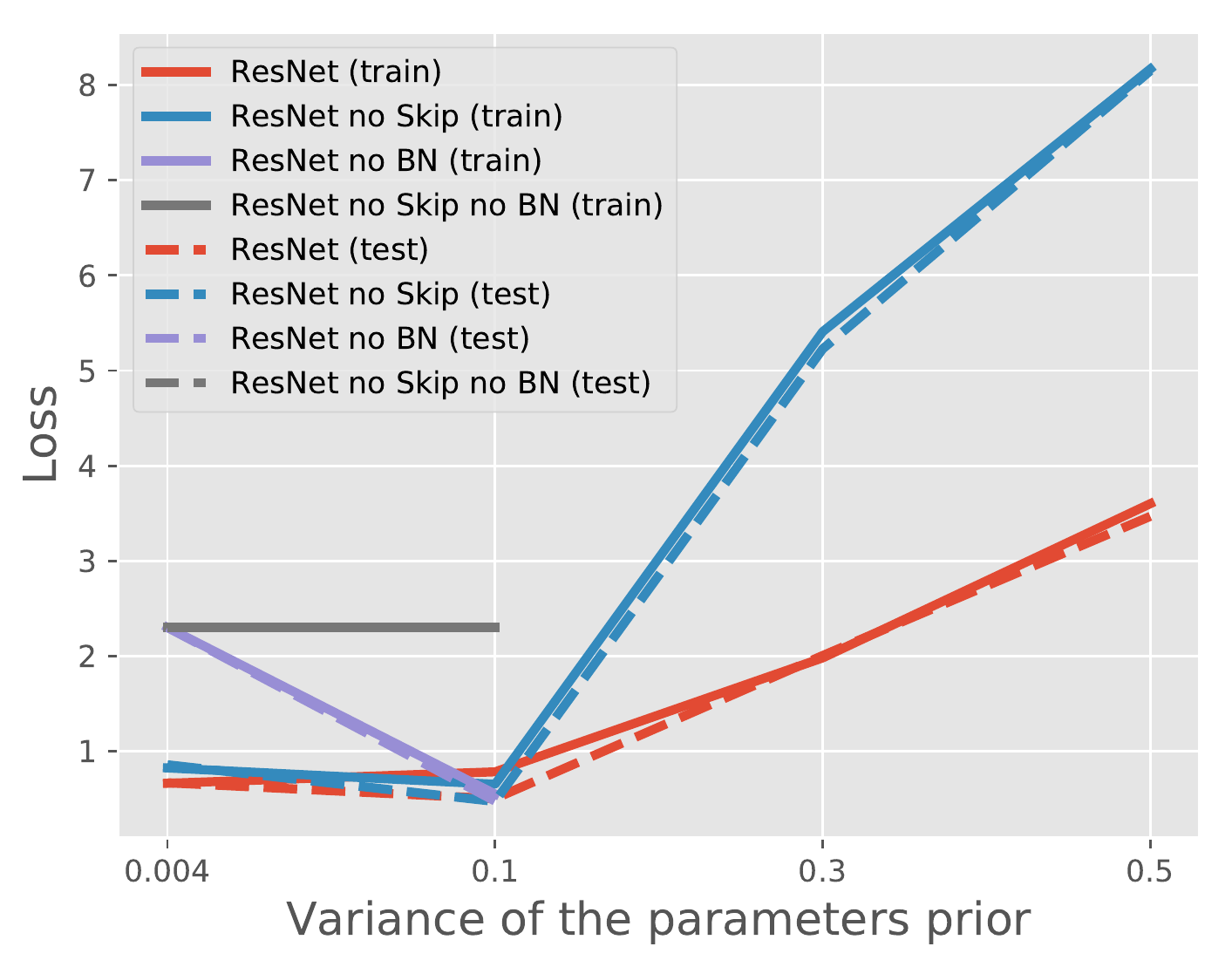}}
  \subfloat{\includegraphics[width=0.3\textwidth, height=4.5cm, keepaspectratio]{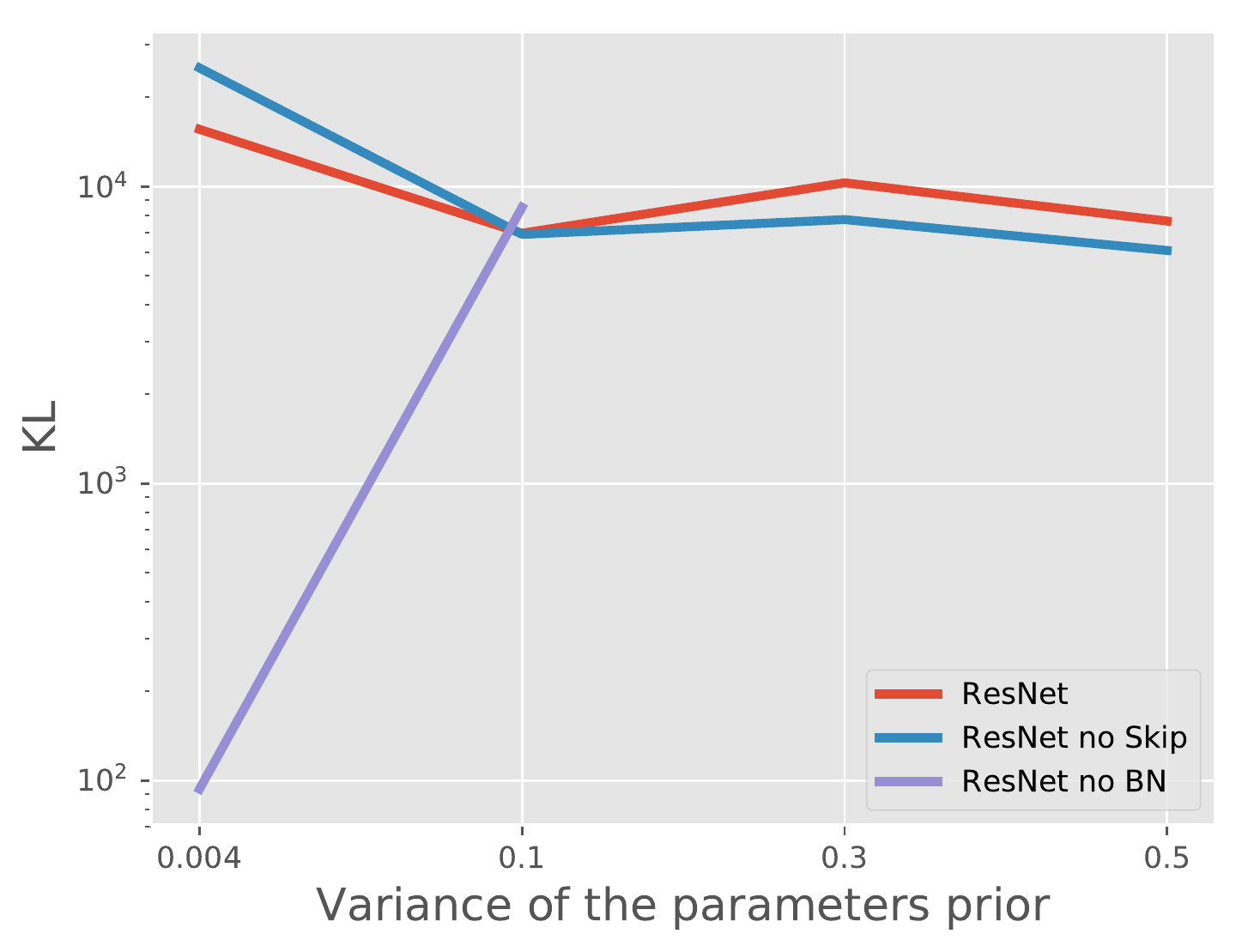}}
  \subfloat{\includegraphics[width=0.3\textwidth, height=4.5cm, keepaspectratio]{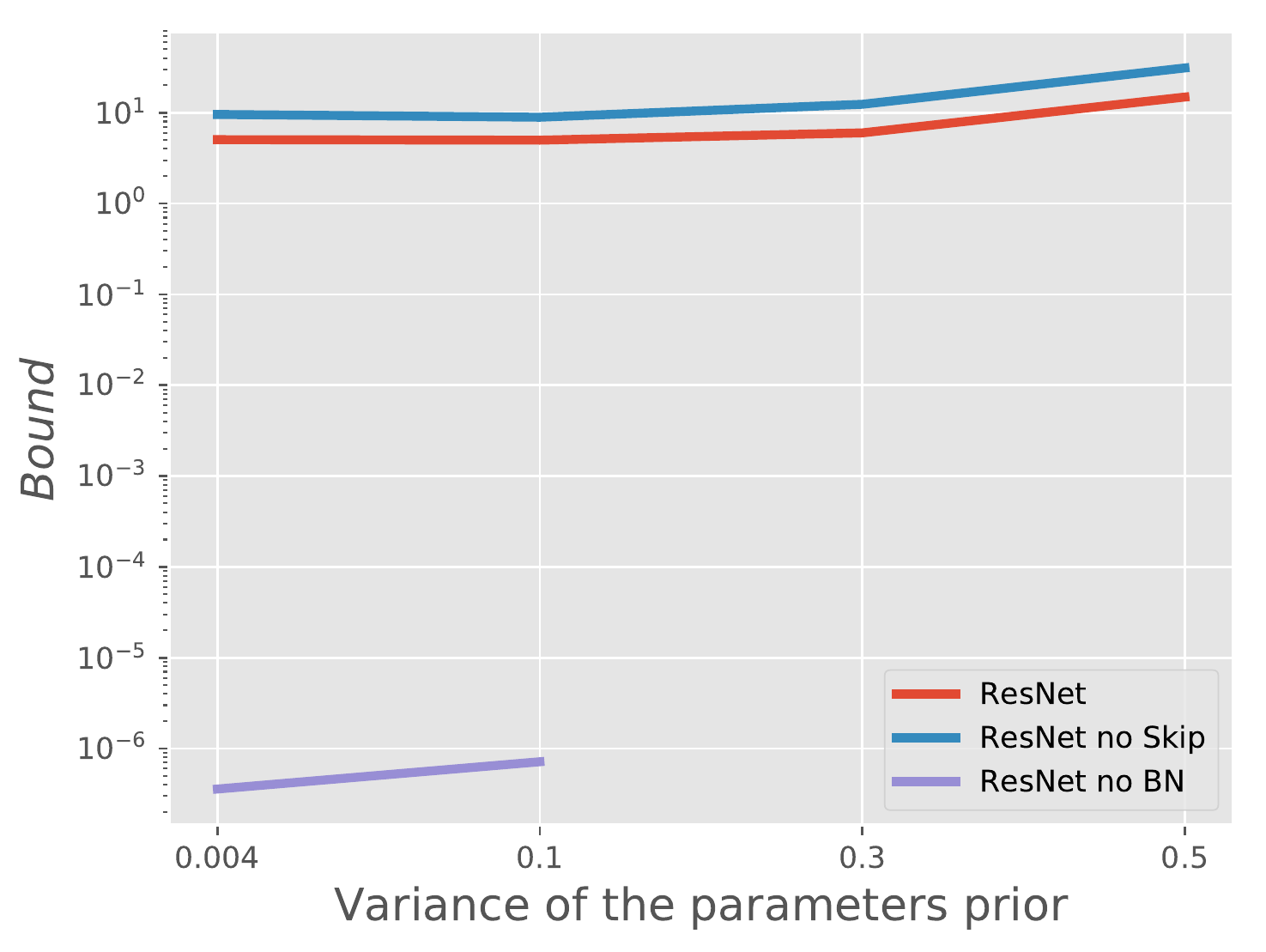}}
  \caption{ResNet variations results for CIFAR10 dataset. In the left subplot we report the average training loss and average test loss (dashed lines). In the middle sub figure we present the KL values, and in the right hand sub figure we report the bound on $C(\lambda,p)$. All results are reported using different variance levels of the prior distribution over the model parameters.}
   \label{fig:resnet}
\end{figure*}

Lastly, we analyzed the commonly used ResNet model~\cite{he2016deep}. For that, we trained four different versions of the ResNet18 model: (i) standard model (ResNet); (ii) model with no skip connections (ResNetNoSkip); (iii) model with no batch normalizations (ResNetNoBN); and (iv) model without both skip connections  and batch normalization layers (ResNetNoSkipNoBN). We optimize all models using CIFAR-10 dataset. Figure~\ref{fig:resnet} visualizes the results. Consistently with previous findings, variance levels of 0.1 gets the best performance overall, both in terms of model test loss and the generalization.

Notice, ResNet and ResNetNoSkip achieve comparable performance in all measures. Additionally, when considering variance levels of 0.1 for the prior distribution, removing the batch normalization layers and including the skip-connections also gets comparable performance to ResNet and ResNetNoSkip. Similarly to~\citet{zhang2019fixup}, this findings suggest that even without batch normalization layers models can converge using exact initialization. On the other hand while removing both batch normalization the and skip connections, models either explores immediately or suffer greatly from gradient vanishing. These results are consistent with previous findings in which batch normazliaton greatly improves optimization~\cite{santurkar2018does}.

\section{Discussion and Future Work}

We present a new PAC-Bayesian generalization bound for deep Bayesian neural networks for unbounded loss functions with unbounded gradient-norm. The proof relies on bounding the log-partition function using the expected squared norm of the gradients with respect to the input. We prove that the proposed bound is sub-gamma for any linear model with a Lipschitz loss function and we verify it empirically for the non-linear case. Experimental validation shows that the resulting bound provides insights for better model optimization, prior distribution search and model initialization. 

\bibliographystyle{icml2020}
\bibliography{iclr2020}

\newpage
\onecolumn
\appendix
\section{Appendix}
\subsection{Extended results}
\label{sec:appendix_res}
We provide an additional results for the optimization experiments. We optimized MLP models at different depth levels using Fashion-MNIST dataset.

 \begin{table}[h]
 \caption{We optimize models of different depth levels, where one corresponds to the linear model, two corresponds to two layers and so on. We report the avg. test loss, avg. train loss,  bound on $C(\lambda,p)$, and the KL value.}
 \begin{center}
 \begin{tabular}{l|l|ccccccc}
  \hline
 & \bf Prior Variance &\bf 0.0004 & \bf 0.01 &  \bf 0.05 & \bf 0.1 &  \bf 0.3 & \bf 0.5 & \bf 0.7\\
 \hline 
  \multirow{5}{*}{\rotatebox[origin=c]{90}{One}}
 & \bf Test Loss 					  & 0.79  	& 0.57   & 0.45   & 0.46  & 1.04  & 1.59  & 10.43\\
 & \bf Train Loss 					  & 0.77  	& 0.55   & 0.41   & 0.39  & 0.86  & 1.31  & 9.97\\
 & \bf MGF bound ($\lambda=m$)        & 10.57  	& 67.31  & inf    & inf   & inf   & inf   & inf\\
 & \bf MGF bound ($\lambda=\sqrt{m}$) & 0.0001  	& 0.0011 & 0.0296 & 0.143 & 1.812 & 6.386 & 14.74\\
 & \bf KL							  & 14255  	& 7829   & 2430   & 1648  & 5379  & 8440  & 35299\\
  \hline 
 \multirow{5}{*}{\rotatebox[origin=c]{90}{Two}}
 & \bf Test Loss 					  & 1.46  	& 0.71   & 0.35   & 0.31  & 0.42   & 0.6     & 10.43\\
 & \bf Train Loss 					  & 1.46  	& 0.69   & 0.29   & 0.2   & 0.3    & 0.49    & 9.97\\
 & \bf MGF bound ($\lambda=m$)        & 0.014  	& 0.53   & inf    & inf   & inf    & inf     & inf\\
 & \bf MGF bound ($\lambda=\sqrt{m}$) & 0.0  	& 0.0    & 0.005  & 0.12  & 16.86  & inf     & inf\\
 & \bf KL							  & 28222  	& 22053  & 6983   & 9180  & 98648  & 168266  & 209935\\
 \hline
  \multirow{5}{*}{\rotatebox[origin=c]{90}{Three}}
 & \bf Test Loss 					  & 2.3  	& 0.92   & 0.34   & 0.31  & 0.42   & 0.83   & 2.39\\
 & \bf Train Loss 					  & 2.3  	& 0.91   & 0.28   & 0.19  & 0.35   & 0.79   & 1.76\\
 & \bf MGF bound ($\lambda=m$)        & 0.0  	& 0.0    & 0.002  & 32.16 & inf    & inf    & inf\\
 & \bf MGF bound ($\lambda=\sqrt{m}$) & 0.0  	& 0.0    & 0.0    & 0.04  & 64.48  & inf    & inf\\
 & \bf KL							  & 0     	& 29826  & 9219   & 1648  & 547636 & 198760 & 374718\\
 \hline
  \multirow{5}{*}{\rotatebox[origin=c]{90}{Four}}
 & \bf Test Loss 					  & 2.3  	& 2.3    & 0.34    & 0.31  & 0.44   & inf    & inf\\
 & \bf Train Loss 					  & 2.3  	& 2.3    & 0.26    & 0.18  & 0.38   & inf    & inf\\
 & \bf MGF bound ($\lambda=m$)        & 0.0  	& 0.0    & 0.0     & 2.81  & inf    & inf    & inf\\
 & \bf MGF bound ($\lambda=\sqrt{m}$) & 0.0  	& 0.0    & 0.0     & 0.01  & inf    & inf    & inf\\
 & \bf KL							  & 0     	& 0      & 10859   & 10165 & 388842 & nan    & nan\\
 \hline
  \multirow{5}{*}{\rotatebox[origin=c]{90}{Five}}
 & \bf Test Loss 					  & 2.3  	& 2.3    & 0.34    & 0.31   & 1.3    & inf    & inf\\
 & \bf Train Loss 					  & 2.3  	& 2.3    & 0.25    & 0.18   & 1.28   & inf    & inf\\
 & \bf MGF bound ($\lambda=m$)        & 0.0  	& 0.0    & 0.0     & 0.19   & inf    & inf    & inf\\
 & \bf MGF bound ($\lambda=\sqrt{m}$) & 0.0  	& 0.0    & 0.0     & 0.003  & inf    & inf    & inf\\
 & \bf KL							  & 0     	& 0      & 11918   & 10986  & 353632 & nan    & nan\\
 \hline 
 \end{tabular}
 \label{tab:optimize_app}
 \end{center}
 \end{table}
\end{document}